\newtheorem{theorem}{Theorem}[section]
\newtheorem{lemma}[theorem]{Lemma}
\newtheorem{proposition}[theorem]{Proposition}
\theoremstyle{definition}
\newtheorem{assumption}{Assumption}[section]
\newtheorem*{remark}{Remark}
\newtheorem*{property*}{Property}
\author{%
  Renzhe Xu$^1$, Xingxuan Zhang$^1$, Bo Li$^2$, Yafeng Zhang$^3$, Xiaolong Chen$^3$, Peng Cui$^{1*}$ \\
  $^1$Department of Computer Science and Technology, Tsinghua University, Beijing, China \\
  $^2$School of Economics and Management, Tsinghua University, Beijing, China \\
  $^3$Meituan Inc., Beijing, China \\
  \texttt{xrz199721gmail.com, xingxuanzhang@hotmail.com, libo@sem.tsinghua.edu.cn} \\
  \texttt{yafengz@outlook.com, chenxiaolong13@meituan.com, cuip@tsinghua.edu.cn}
}
\newenvironment{itm}
{\begin{itemize}[wide,noitemsep,topsep=0pt,parsep=0pt,partopsep=0pt]}
{\end{itemize}}
\def\maketag@@@#1{\hbox{\m@th\normalfont\normalsize#1}}
\newcommand{\bbE}{\mathbb{E}}
\newcommand{\bfX}{\boldsymbol{X}}
\newcommand{\bfY}{\boldsymbol{Y}}
\newcommand{\bfI}{\mathbf{I}}
\newcommand{\bbI}{\mathbb{I}}
\newcommand{\bbR}{\mathbb{R}}
\newcommand{\calH}{\mathcal{H}}
\newcommand{\bflambda}{\boldsymbol{\lambda}}
\newcommand{\bfGamma}{\boldsymbol{\gamma}}
\newcommand{\bfsigma}{\boldsymbol{\sigma}}
\newcommand{\bfeta}{\boldsymbol{\eta}}
\newcommand{\bfmu}{\boldsymbol{\mu}}
\newcommand{\bfr}{\boldsymbol{r}}
\newcommand{\revenue}{\mathrm{Revenue}}
\newcommand{\ppurchase}{P^{\mathrm{buy}}}
\newcommand{\geo}{\mathrm{Geo}}
\newcommand{\ber}{\mathrm{Ber}}
\newcommand{\regret}{\mathrm{Reg}}
\newcommand{\vect}{\mathrm{vec}}
\newcommand{\proj}{\mathrm{Proj}}
\newcommand{\trace}{\mathrm{trace}}
\title{Product Ranking for Revenue Maximization with Multiple Purchases}
\begin{document}
\maketitle

\begin{abstract}
    \renewcommand{\thefootnote}{\fnsymbol{footnote}}
    \footnotetext[1]{Corresponding Author}
    
    Product ranking is the core problem for revenue-maximizing online retailers. To design proper product ranking algorithms, various consumer choice models are proposed to characterize the consumers' behaviors when they are provided with a list of products. However, existing works assume that each consumer purchases at most one product or will keep viewing the product list after purchasing a product, which does not agree with the common practice in real scenarios. In this paper, we assume that each consumer can purchase multiple products at will. To model consumers' willingness to view and purchase, we set a random attention span and purchase budget, which determines the maximal amount of products that he/she views and purchases, respectively. Under this setting, we first design an optimal ranking policy when the online retailer can precisely model consumers' behaviors. Based on the policy, we further develop the Multiple-Purchase-with-Budget UCB (MPB-UCB) algorithms with $\tilde{O}(\sqrt{T})$ regret that estimate consumers' behaviors and maximize revenue simultaneously in online settings. Experiments on both synthetic and semi-synthetic datasets prove the effectiveness of the proposed algorithms. The source code is available at \url{https://github.com/windxrz/MPB-UCB}.
\end{abstract}
\section{Introduction}
Online retailing has become increasingly popular over the last decades \citep{chen2021revenue,ferreira2021learning,xu2022regulatory}.
The way of product ranking is the crux for online retailers because it determines the consumers' shopping behaviors \citep{chen2021revenue} and thus influences the retailers' revenue \citep{comunian2010location,ursu2018power}. For instance, the probability of consumers' purchasing from a firm or clicking an advertisement is strongly related to the display order \citep{baye2009clicks,agarwal2011location,ghose2009empirical}.
Therefore, it is crucial for online retailers to design proper product ranking algorithms for revenue maximization.

The consumer choice model, when faced with a list of products, is the basis of the problem \citep{chen2021revenue}.
Generally speaking, existing works assume that the consumers view the list of products sequentially and select products according to a choice model \citep{mahajan2001stocking,liu2011efficient,gao2022joint,kveton2015cascading,chen2017sequential,dzyabura2019recommending,derakhshan2022product,koulayev2014search}.
However, most of them assume that each consumer purchases at most one product and stop browsing immediately afterward, which does not agree with the common practice since a consumer may expect multiple purchases on a website or an app \citep{guo2009click,guo2009efficient,chapelle2009dynamic,ferreira2021learning}.   
Recently, several works consider a multiple-purchase setting while their targets are not revenue maximization \citep{ferreira2021learning,cao2020fatigue}, or they assume that consumers always continue viewing the product list after purchasing any product, which is not practical in real scenarios \citep{liang2021sequential}.

In this paper, we propose a more realistic consumer choice model to characterize consumer behaviors under multiple-purchase settings. Following \citep{cao2019dynamic,chen2021revenue,liang2021sequential}, we assume that the consumers view the product list sequentially and purchase a product when its utility exceeds a certain threshold.
Similar to \citep{chen2021revenue}, we assume the existence of the attention span for each consumer, which determines the maximal amount of products that the consumer is willing to view. More importantly, instead of forcing the consumer to quit viewing once she purchases one product, we allow each consumer to conduct multiple purchases at will and explore product ranking models for revenue maximization. Specifically, we constrain each consumer with a purchase budget to determine the maximal amount of purchases.
Both the attention span and the purchase budget are assumed to obey the geometric distribution \citep{cao2019dynamic,liang2021sequential}.

Under this setting, we study the optimal product ranking policy to maximize online retailers' revenue. We first characterize the optimal ranking policy that achieves the maximal revenue when the online retailer knows the consumer parameters, including the purchase probabilities for each product and the distribution of the attention span and purchase budget. Afterward, we investigate the problem in online learning frameworks where consumer parameters are unknown to online retailers. In both non-contextual (\textit{i.e.}, all consumers share the same parameters) and contextual settings (\textit{i.e.}, consumers have personalized parameters), we provide the Multiple-Purchase-with-Budget UCB (MPB-UCB) algorithms that could fit consumers' parameters and maximize revenue simultaneously. We further theoretically prove that the algorithms achieve $\tilde{O}(\sqrt{T})$ regrets in both settings. Finally, we conduct experiments to verify the effectiveness of the algorithms.

To conclude, our contributions are highlighted as follows:
\begin{itemize}[noitemsep,topsep=0pt,parsep=0pt,partopsep=0pt]
    \item We propose a novel consumer choice model to deal with multiple-purchase activities of consumers in online retailing scenarios.
    \item We characterize the optimal ranking policy and further design the MPB-UCB algorithms with $\tilde{O}(\sqrt{T})$ regret based on them in both non-contextual and contextual online settings.
    \item We conduct experiments on both synthetic and semi-synthetic datasets to prove the effectiveness of our method in the multiple-purchase setting.
\end{itemize}
\section{Related Works}
\paragraph{Consumer choice model}
Traditional choice models adopt the multinomial logic (MNL) \citep{bront2009column,davis2014assortment,mahajan2001stocking} models to characterize consumers' behaviors. However, these models can not describe the behaviors when the consumers are provided with a list of products. As a result, several choice models are proposed recently. In detail, \citet{abeliuk2016assortment} add a position bias term in the traditional MNL model. \citet{gallego2020approximation,aouad2021display} suppose that the products are framed into a set of virtual web pages. Consumers select a product, if any, from these pages following a general choice model (such as MNL). \citet{flores2019assortment} propose the Sequential Multinomial Logit (SML), which assumed a two-stage MNL model and supposed that products are divided into two stages with prior knowledge. \citet{liu2020assortment,feldman2019improved,gao2021assortment} further extend the setting to multiple stages. These works suppose that each stage contains multiple products while recently several works assume that a product is treated as a single stage \citep{chen2021revenue,cao2019dynamic,cao2019sequential}. These models are also closely related to the cascade model \citep{bront2009column,katariya2016dcm,kveton2015cascading,zong2016cascading,cheung2019thompson}. \citet{gao2022joint} further propose the general cascade click models. Following these works, we suppose that each stage contains only one product and the consumers view the products sequentially.

In addition, several works consider the setting where consumers may have multiple interactions with the platforms. However, most of them focus on the click model \citep{cao2020fatigue, chapelle2009dynamic, guo2009click, guo2009efficient,katariya2016dcm} or MNL-based consumer choice models \citep{tulabandhula2020multi,zhang2021assortment,wang2022multiple}. In addition, \citet{ferreira2021learning} maximize the number of consumers who engage with the site. \citet{liang2021sequential} aim to maximize the revenue in the multiple-purchase setting while they assume that consumers always continue viewing the product list after purchasing any product, which is not practical in real scenarios. 

\paragraph{Online learning and multi-armed bandit}
Online retailers need to design proper algorithms to model consumers' behaviors and maximize revenue at the same time, which is closely related to the multi-armed bandit framework \citep{sutton2018reinforcement,robbins1952some}. Many works \citep{araman2009dynamic,besbes2009dynamic,broder2012dynamic,besbes2012blind,den2015dynamic} have been proposed to deal with demand learning or price experimentation problems with the help of the framework in the field of revenue management. Recently, more methods \citep{liang2021sequential,chen2021revenue,cao2019dynamic,cao2019sequential,asadpour2020ranking,niazadeh2021online} adopt similar techniques in the product ranking setting, which inspires the algorithm proposed in this paper.
\section{Preliminaries}
Let $[N] \triangleq \{1, 2, \dots, N\}$ be the set of all products owned by an online retailer (he). The product $k$ generates revenue $r_k$ when a consumer purchases it. Let $r_{\max} \triangleq \max_{k \in [N]} r_k$ be the maximal revenue for all products and $\bfr = \{r_k\}_{k=1}^N$. We use the notation $\bfsigma_t \dashv [N]$ to denote the ranking policy $\bfsigma_t$ on the products for the $t$-th consumer (she). More precisely, $\forall k \in [N]$, $\sigma_{t,k} \in [N]$ represents the product displayed in the $k$-th position for her. Reversely, $\sigma_{t,k}^{-1}$ denotes the position of the $k$-th product in the ranking list. Furthermore, for any subset $Q \subseteq [N]$ that represents the possible positions in the ranking list, we use $\sigma_{t, Q}$ to denote the corresponding set of the products \textit{w.r.t.} the position set $Q$, \textit{i.e.}, $\sigma_{t, Q} = \{\sigma_{t,k}: k \in Q\}$. In addition, we use $\max(\cdot)$ to denote the maximal element in a set.

\paragraph{Consumer choice model for multiple purchases} When the $t$-th consumer arrives, she views the list of products sequentially. We assume that the behaviors of different consumers are independent. The probability of product $k$ being purchased by her is denoted as $\lambda_{t,k}$ and let $\bflambda_t = \{\lambda_{t,k}\}_{k=1}^N$. We assume for tractability that consumer interests in any products are independent, which is common in product ranking papers \citep{aouad2021display,chen2017sequential,de2017optimizing,ursu2018power}. This indicates that the purchase behavior of any product set does not affect the purchase probabilities of other products.

To characterize the consumers' behaviors under multiple-purchase settings, we assume that the consumer is endowed with the attention span $V_t$ and purchase budget $B_t$, which determine the maximal number of the products that she will view and purchase, respectively. In reality, the consumer always has random attention span and purchase budget \citep{chen2021revenue}. Therefore, following \citep{cao2019dynamic,liang2021sequential}, we assume $V_t \sim \geo(1 - q_t)$ and $B_t \sim \geo(1 - s_t)$ are independent geometric distributions with parameters $1 - q_t$ and $1 - s_t$, respectively. As a result, $P(V_t = v) = q_t^{v - 1}(1-q_t)$ for any $v \ge 1$ and $P(B_t = b) = s_t^{b-1}(1-s_t)$ for any $b \ge 1$. With attention span $V_t$ and purchase budget $B_t$, the consumer keeps viewing the products until she views $V_t$ or purchases $B_t$ products. 

Because of the attention span $V_t$, the consumer will not purchase products whose positions are greater than $V_t$ in the ranking list. Let $Q \subseteq [V_t]$ denote the possible positions of the purchased products by the consumer. The corresponding indices of purchased products are $\sigma_{t, Q}$. Therefore, the probability that the consumer purchases the product set $\sigma_{t, Q}$ is given by
\begin{small}
    \begin{equation}
        \ppurchase(Q; V_t, B_t, \bflambda_t, \bfsigma_t) = \begin{cases}
            \left(\prod_{k \in Q}\lambda_{t, \sigma_{t, k}}\right)\left(\prod_{k \in [V_t] \backslash Q}\left(1 - \lambda_{t, \sigma_{t, k}}\right)\right), & \text{if } |Q| < B_t. \\
            \left(\prod_{k \in Q}\lambda_{t, \sigma_{t, k}}\right)\left(\prod_{k \in [V_t] \backslash Q, k \le \max(Q)}\left(1 - \lambda_{t, \sigma_{t, k}}\right)\right), & \text{if } |Q| = B_t. \\
            0, & \text{if } |Q| > B_t.
        \end{cases}
    \end{equation}
\end{small}

Specifically, the probability $\ppurchase(Q; V_t, B_t, \bflambda_t, \bfsigma_t)$ is calculated based on the cardinality of $Q$. When $|Q| < B_t$, the consumer does not meet her purchase budget and stops browsing the list after viewing $V_t$ products. When $|Q| = B_t$, the consumer stops browsing as long as she purchases the last product in the list $Q$. When $|Q| > B_t$, the probability becomes $0$ since the number of purchases $|Q|$ exceeds her purchase budget.

\begin{remark}
    The setting of the purchase budget provides a rational way to characterize the consumers' behaviors with multiple purchases and extends previous works that suppose consumers would purchase at most one product \citep{cao2019dynamic,chen2021revenue}. Furthermore, when $s_t \rightarrow 0$, the consumers would tend to purchase at most one product and our setting degenerates to the setting proposed in \citep{cao2019dynamic}. Thus the previous setting can be considered a special case of our setting. In addition, \citet{liang2021sequential} assume that consumers always continue viewing the product list after purchasing any product, which is not practical in real scenarios. By contrast, the purchase budget proposed in our paper leads to a different and more practical consumer choice model. In addition, our model introduces an extra parameter (\textit{i.e.}, the parameter), which makes our problem more challenging.
\end{remark}

\paragraph{Revenue optimization}
The online retailer chooses a ranking policy $\bfsigma_t \dashv [N]$ on the products for the $t$-th consumer. For the consumer with fixed purchase probabilities for each product $\bflambda_t$, attention span $V_t$, and purchase budget $B_t$, the expected revenue could be written as
\begin{small}
    \begin{equation}
        R(V_t, B_t; \bflambda_t, \bfsigma_t) = \sum_{Q \subseteq [V_t]}\ppurchase(Q; V_t, B_t, \bflambda_t, \bfsigma_t)\left(\sum_{i \in Q}r_{\sigma_{t,i}}\right).
    \end{equation}
\end{small}

In reality, the attention span $V_t$ and purchase budget $B_t$ are random and can not be estimated accurately by the online retailer. As a result, the retailer calculates the expected revenue for the $t$-th consumer by sampling $V_t$ from $\geo(1 - q_t)$ and sampling $B_t$ from $\geo(1 - s_t)$, \textit{i.e.},
\begin{small}
\begin{equation} \label{eq:revenue}
    \revenue\left(\bfsigma_t; \bflambda_t, q_t, s_t\right) = \bbE_{V_t \sim \geo(1 - q_t), B_t \sim \geo(1 - s_t)}\left[R(V_t, B_t; \bflambda_t, \bfsigma_t)\right].
\end{equation}
\end{small}
To maximize the revenue, the optimal ranking policy is given by
\begin{small}
\begin{equation}
    \bfsigma_t^*(\bflambda_t, q_t, s_t) = \argmax_{\bfsigma_t \dashv [N]} \revenue\left(\bfsigma_t; \bflambda_t, q_t, s_t\right).
\end{equation}
\end{small}
For simplicity, we use $\bfsigma_t^*$ to denote $\bfsigma_t^*(\bflambda_t, q_t, s_t)$.

\paragraph{Basic assumption}
We also need the following assumption to guarantee the tractability of the proposed problem. In reality, consumers always have finite attention span, which indicates that $q_t$ is always smaller than $1$.

\begin{assumption} \label{assum:q-bound}
    There exists a known constant $\epsilon_Q > 0$ such that for all $t > 0$, $q_t \le 1 - \epsilon_Q$.
\end{assumption}
\section{Optimal Ranking Policy Given Consumers' Characteristics}
In this section, we introduce the optimal ranking policy when the online retailer exactly knows the consumers' parameters, including the purchase probability for each product $\bflambda$, random attention span (parametrized with $q$), and random purchase budget (parametrized with $s$). 

We drop the subscript $t$ in this section for simplicity. The optimal ranking strategy stems from the following theorem.

\begin{small}
\begin{theorem} \label{thrm:offline}
    Under \assumptionref{assum:q-bound}, the following holds for the optimal ranking strategy $\bfsigma^*(\bflambda, q, s)$.
    \begin{equation} \label{eq:offline}
        \forall 1 \le i < N, \quad \frac{\lambda_{\sigma^*_i}r_{\sigma^*_i}}{1 - q + q(1-s)\lambda_{\sigma^*_i}} \ge \frac{\lambda_{\sigma^*_{i+1}}r_{\sigma^*_{i+1}}}{1 - q + q(1-s)\lambda_{\sigma^*_{i+1}}}.
    \end{equation}
\end{theorem}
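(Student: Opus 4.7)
The plan is a standard adjacent–swap (exchange) argument, enabled by first rewriting the expected revenue in a tractable closed form. The key observation is that, with $V\sim\geo(1-q)$ and $B\sim\geo(1-s)$ both independent of the (independent) purchase indicators $X_j\sim\ber(\lambda_{\sigma_j})$, the probability that the consumer actually reaches position $k$ factorises. Since $P(V\ge k)=q^{k-1}$ and $P(B>m)=s^m$, and the number of purchases before position $k$ is $\sum_{j<k}X_j$, we get
\begin{equation*}
P(\text{position } k \text{ viewed}) \;=\; q^{k-1}\,\bbE\!\left[s^{\sum_{j=1}^{k-1}X_j}\right] \;=\; q^{k-1}\prod_{j=1}^{k-1}\bigl(1-(1-s)\lambda_{\sigma_j}\bigr),
\end{equation*}
so that, using independence of $X_k$ from the viewing event,
\begin{equation*}
\revenue(\bfsigma;\bflambda,q,s) \;=\; \sum_{k=1}^{N}\lambda_{\sigma_k}r_{\sigma_k}\,q^{k-1}\prod_{j=1}^{k-1}\bigl(1-(1-s)\lambda_{\sigma_j}\bigr).
\end{equation*}
Assumption \ref{assum:q-bound} ensures $q<1$, which keeps all quantities well-defined.

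Next I would apply the swap. Pick any consecutive pair at positions $i,i+1$ of $\bfsigma^*$, set $a=\sigma^*_i$, $b=\sigma^*_{i+1}$, and let $C=\prod_{j=1}^{i-1}q\bigl(1-(1-s)\lambda_{\sigma^*_j}\bigr)$ be the common prefix factor. Terms with $k<i$ are unchanged by the swap, and terms with $k>i+1$ are also unchanged because their prefix product is over the same unordered set $\{\sigma^*_1,\ldots,\sigma^*_{i+1}\}$. Hence the entire difference comes from positions $i$ and $i+1$:
\begin{equation*}
\revenue(\bfsigma^*)-\revenue(\bfsigma^{\text{swap}}) \;=\; C\bigl[\lambda_a r_a\bigl(1-q+q(1-s)\lambda_b\bigr)-\lambda_b r_b\bigl(1-q+q(1-s)\lambda_a\bigr)\bigr],
\end{equation*}
after expanding and cancelling the symmetric cross term $q(1-s)\lambda_a\lambda_b$. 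Since $C>0$, the sign of this difference is exactly the sign of
\begin{equation*}
\frac{\lambda_a r_a}{1-q+q(1-s)\lambda_a}-\frac{\lambda_b r_b}{1-q+q(1-s)\lambda_b}.
\end{equation*}

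Finally, optimality of $\bfsigma^*$ forces the above difference to be non-negative for every adjacent pair, which is exactly \eqref{eq:offline}; otherwise the swap would strictly improve the revenue, contradicting optimality. (In case of ties, any tie-breaking rule still yields an optimal policy satisfying the stated weak inequality.) I do not expect a major obstacle: the only slightly delicate step is the closed-form derivation, where one must carefully justify that $X_k$ remains $\ber(\lambda_{\sigma_k})$ conditional on position $k$ being viewed; this follows because the viewing event is measurable with respect to $V,B,X_1,\ldots,X_{k-1}$, all of which are independent of $X_k$. The exchange algebra itself is routine once the revenue is written as the linear combination above.
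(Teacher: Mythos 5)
Your proposal is correct and follows essentially the same route as the paper: decompose the expected revenue position-by-position into $\sum_k r_{\sigma_k}\lambda_{\sigma_k}\,P(\text{position }k\text{ viewed})$, observe that the prefix factor is symmetric in the preceding products, and apply an adjacent-swap/optimality argument whose net algebra reduces to exactly the cross-multiplied form of \eqref{eq:offline}. The only cosmetic difference is that you evaluate the viewing probability in closed form via the generating function $\bbE\bigl[s^{\sum_{j<k}X_j}\bigr]=\prod_{j<k}\bigl(1-(1-s)\lambda_{\sigma_j}\bigr)$, whereas the paper carries the equivalent quantity as the combinatorial sum $\sum_{u}s^{u}H_{\bfsigma}(k,u;\bflambda)$; both proofs rely on the same (implicitly assumed) positivity of this common prefix factor when dividing it out.
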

\end{small}

\begin{remark}
    The expected revenue of a single product $\sigma^*_i$ is $\lambda_{\sigma^*_i}r_{\sigma^*_i}$, which grows as the purchase probability and the revenue generated by the product increase. Yet purchasing a product reduces the purchase probability of the following products, and thus the expected revenue of other products decreases as $\lambda_{\sigma^*_i}$ increases. Furthermore, when $s \rightarrow 0$, the term in \equationref{eq:offline} becomes the same as that in \citep[Theorem 1]{cao2019dynamic} if ignoring the marketing fatigue (\textit{i.e.}, overexposure to unwanted marketing messages) term in their result. As a result, \theoremref{thrm:offline} consistently extends the results on the single-purchase setting \citep{cao2019dynamic} to the multiple-purchase setting.
    
    In addition, \citet{liang2021sequential} assume that the online retailer will stop recommending products with a certain probability $1 - w_t$ to avoid the cost brought by marketing fatigue. As a result, the parameter $w_t$ is a part of the ranking policy instead of consumers' characteristics and should be optimized to achieve maximal revenue. In our paper, we focus on the multiple purchases with budget setting and leave the extension to the case with the marketing fatigue as future work.
\end{remark}

According to \theoremref{thrm:offline}, the online retailer can provide the optimal ranking policy $\bfsigma^*$ by sorting the products in descending order with value $\left(\lambda_kr_k\right) / \left(1 - q + q(1-s)\lambda_k\right)$ for the $k$-th product. The time complexity is $O(N \log N)$.







\section{Online Learning of the Ranking Policy}
In this section, we consider a more realistic scenario where the online retailer has no prior knowledge about consumers' characteristics. At each timestamp, a consumer arrives and the online retailer can only provide proper product ranking policies based on the feedback of previous consumers. As a result, the retailer needs to design online learning algorithms to model consumers' behaviors and maximize revenue in the meantime. We consider two settings (\textit{i.e.}, the non-contextual setting where all consumers share the same parameters (\sectionref{sec:non-contextual}) and the contextual setting where consumers have personalized behaviors (\sectionref{sec:contextual})) and develop the Multiple-Purchase-with-Budget UCB (MPB-UCB) algorithms on both settings. Before introducing the details of the algorithms, we first formally define the notations to characterize the consumers' behaviors.

\paragraph{Notations for consumers' behaviors}
Let $\Phi_t \in \mathbb{Z}_+$ be the random variable that denotes the number of products viewed by the $t$-th consumer. Let random variable $\gamma_{t,k} \in \{0,1\}$ denote whether the consumer purchases the product displayed in the $k$-th position (\textit{i.e.}, product $\sigma_{t,k}$) if she views the product. Specifically, $\gamma_{t,k} = 1$ if and only if the product is purchased. Let $\bfGamma_t = \{\gamma_{t,k}\}_{k=1}^{\Phi_t}$.

Let random variable $\eta_{t,k} \in \{0, 1\}$ denote whether the consumer continues viewing products if she views the product displayed in the $k$-th position but does not buy it. $\eta_{t,k}$ is observed only when $k \le \Phi_t$ and $\gamma_{t,k} = 0$. Let random variable $\mu_{t,k} \in \{0, 1\}$ denote whether the consumer keeps viewing products if she buys the product in the $k$-th position. $\mu_{t,k}$ is observed only when $k \le \Phi_t$ and $\gamma_{t,k} = 1$. Let $\bfeta_t = \{\eta_{t,k}\}_{k=1}^{\Phi_t}$ and $\bfmu_t = \{\mu_{t,k}\}_{k=1}^{\Phi_t}$.

Let $\ber(\cdot)$ denote the Bernoulli distribution and we have the following result.

\begin{lemma} \label{lem:random-variable}
    $\{\gamma_{t,k}\}_{t > 0, 1 \le k < N}$ are independent of each other, $\{\eta_{t,k}\}_{t>0, 1 \le k < N}$ are independent of each other, and $\{\mu_{t,k}\}_{t > 0, 1\le k < N}$ are independent of each other. In addition, $\forall t > 0$ and $1 \le k < N$, $\gamma_{t,k} \sim \ber\left(\lambda_{\sigma_{t,k}}\right)$, $\eta_{t,k} \sim \ber(q_t)$, and $\mu_{t,k} \sim \ber(q_ts_t)$.
\end{lemma}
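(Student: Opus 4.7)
The plan is to establish the three independence claims and the three Bernoulli identifications separately. Independence across the consumer index $t$ is immediate because different consumers' behaviors are assumed mutually independent in the model, so I would fix $t$ and focus on proving independence across $k$ within each family.

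For $\{\gamma_{t,k}\}_k$, I would invoke the modeling assumption that consumer interests across different products are independent. Since $\gamma_{t,k}$ records whether the consumer would purchase $\sigma_{t,k}$ conditional on viewing it, and the identity of the product in position $k$ is determined by the retailer's ranking, each $\gamma_{t,k}$ is an independent Bernoulli$(\lambda_{\sigma_{t,k}})$ draw, directly matching the claim.

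For $\{\eta_{t,k}\}_k$ and $\{\mu_{t,k}\}_k$, the cleanest route is a coupling argument that exploits the memorylessness of the geometric distribution. I would introduce two i.i.d.\ Bernoulli sequences $\{\tilde{\eta}_{t,k}\}_{k \ge 1} \sim \ber(q_t)$ and $\{\tilde{\mu}_{t,k}\}_{k \ge 1} \sim \ber(s_t)$, independent of each other and of the $\gamma$'s, and define
\begin{equation*}
V_t = \min\{k \ge 1 : \tilde{\eta}_{t,k} = 0\}, \qquad B_t = \min\{j \ge 1 : \tilde{\mu}_{t,j} = 0\}.
\end{equation*}
Standard properties of geometric distributions give $V_t \sim \geo(1-q_t)$ and $B_t \sim \geo(1-s_t)$, matching the assumed laws. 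I would then argue that, on the event $\{k \le \Phi_t, \gamma_{t,k} = 0\}$ where $\eta_{t,k}$ is observed, $\eta_{t,k}$ coincides with $\tilde{\eta}_{t,k}$, and similarly on $\{k \le \Phi_t, \gamma_{t,k} = 1\}$ the indicator $\mu_{t,k}$ coincides with $\tilde{\eta}_{t,k} \cdot \tilde{\mu}_{J_{t,k}}$, where $J_{t,k}$ denotes the running purchase count at position $k$. Since $\tilde{\eta}_{t,k}$ and $\tilde{\mu}_{J_{t,k}}$ are independent Bernoullis with parameters $q_t$ and $s_t$, their product is $\ber(q_t s_t)$, as claimed. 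Independence across $k$ then follows because different positions index distinct, independent entries of the underlying i.i.d.\ sequences (for $\mu$, distinct positions correspond to distinct values of $J_{t,k}$ since the running purchase count strictly increases at a buy).

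The main obstacle I anticipate is the partial-observability subtlety: $\eta_{t,k}$ and $\mu_{t,k}$ are only defined/observed on specific events, so one must give them a coherent joint distribution before asserting independence. The coupling above resolves this by realizing every $\eta_{t,k}, \mu_{t,k}$ as a deterministic function of always-defined i.i.d.\ auxiliary variables, which both reproduces the correct marginals of $V_t$ and $B_t$ and makes the independence across $k$ transparent from the independence of the auxiliary Bernoullis. A small bit of care is also needed to check that the product $\tilde{\eta}_{t,k} \cdot \tilde{\mu}_{J_{t,k}}$ truly captures ``keeps viewing after buying,'' which amounts to unpacking the stopping rule of the consumer choice model.
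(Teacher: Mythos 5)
Your proposal is correct, but it takes a genuinely different route from the paper. The paper proves the lemma by a direct conditional-probability computation: for example, it writes $P(\eta_{t,k}=1) = P(V_t > k, B_t > u \mid V_t \ge k, B_t > u)$, uses the independence of $V_t$ and $B_t$ together with the memorylessness of the geometric distribution to evaluate this as $q_t^{k}/q_t^{k-1} = q_t$, observes that the answer does not depend on the purchase history $u$, and concludes independence from there (and analogously gets $q_t^{k}s_t^{u}/(q_t^{k-1}s_t^{u-1}) = q_ts_t$ for $\mu_{t,k}$). You instead build an explicit coupling, realizing $V_t$ and $B_t$ as hitting times of auxiliary i.i.d.\ $\ber(q_t)$ and $\ber(s_t)$ sequences and identifying $\eta_{t,k} = \tilde{\eta}_{t,k}$ and $\mu_{t,k} = \tilde{\eta}_{t,k}\tilde{\mu}_{t,J_{t,k}}$ on their respective observation events. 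The paper's argument is shorter, but it is somewhat terse on why a history-independent conditional law yields joint independence of a family of only partially observed variables; your coupling handles exactly that subtlety by giving every $\eta_{t,k}$ and $\mu_{t,k}$ an always-defined representative as a function of disjoint sets of independent auxiliary variables (the strict monotonicity of the running purchase count $J_{t,k}$ being the key observation for the $\mu$ family). The one point worth stating explicitly in your write-up is that $J_{t,k}$ is measurable with respect to the $\gamma$'s, which are independent of the auxiliary sequences, so the disjointness argument can be carried out conditionally on the purchase pattern; with that noted, your proof is complete and, if anything, slightly more rigorous than the paper's.
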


\lemmaref{lem:random-variable} is the basis of the algorithms that estimate parameters $\bflambda$, $q$, and $s$ from historical data. In addition, compared with $s_t$, it is easier to estimate $q_ts_t$ from data and we denote it as $w_t \triangleq q_ts_t$.

\subsection{Non-contextual Online Setting} \label{sec:non-contextual}
In this subsection, we assume that all consumers share the same parameters, including the purchase probabilities on different products and the parameters on the distribution of the attention span and purchase budget. In detail, we assume that there exist $\bflambda, q, s$ such that $\bflambda_t=\bflambda$, $q_t = q$, and $s_t=s$.

\subsubsection{Estimation of Parameters}
To estimate the parameters at timestamp $t$, we leverage the observed behavior of consumers before timestamp $t$. We first calculate the following statistics that can help provide unbiased estimators of the consumer parameters.

In detail, let $C_{t,k}$ be the number of times that the product $k$ is observed and $c_{t,k}$ be the number of times the product $k$ is purchased, \textit{i.e.}, $C_{t,k} \triangleq \sum_{u=1}^t\sum_{i=1}^{\Phi_t} \bbI\left[\sigma_{t,i} = k\right]$ and $c_{t,k} \triangleq \sum_{u=1}^t\sum_{i=1}^{\Phi_t} \bbI\left[\sigma_{t,i} = k\right]\bbI\left[\gamma_{t,i}=1\right]$. In addition, Let $D_t^Q$ be the number of times that consumers view a product and do not purchase it and $d_t^Q$ be the number of times that consumers continue viewing the list after viewing a product without purchasing it, \textit{i.e.}, $D_t^Q \triangleq \sum_{u=1}^t\sum_{i=1}^{\Phi_t} \bbI\left[\gamma_{t,i} = 0\right]$ and $d_t^Q \triangleq \sum_{u=1}^t\sum_{i=1}^{\Phi_t} \bbI\left[\gamma_{t,i} = 0\right]\eta_{t,i}$. Furthermore, let $D_t^W$ be the number of times that consumers purchase a product and $d_t^W$ be the number of times that consumers continue viewing the list after purchasing a product, \textit{i.e.}, $D_t^W \triangleq \sum_{u=1}^t\sum_{i=1}^{\Phi_t} \bbI\left[\gamma_{t,i} = 1\right]$ and $d_t^W \triangleq \sum_{u=1}^t\sum_{i=1}^{\Phi_t} \bbI\left[\gamma_{t,i} = 1\right]\mu_{t,i}$.

With these statistics, the parameters $\bflambda$, $q$, and $w = qs$ are estimated as follows.
\begin{small}
\begin{equation} \label{eq:unbiased-estimator}
    \hat{\lambda}_{t,k} \triangleq c_{t,k}/C_{t,k}, \quad \hat{q}_t \triangleq d_t^Q / D_t^Q, \quad \hat{w}_t \triangleq d_t^W / D_t^W.
\end{equation}
\end{small}

We provide the following proposition that gives the error bound of our parameter estimation approach.
\begin{proposition} \label{prop:estimation-bound}
    For all $t \in [T]$, with probability at least $1 - 6Nt^{-3}$, the following holds:
    \begin{small}
        \begin{equation}
            \forall k \in [N],
            \left|\hat{\lambda}_{t,k} - \lambda_k\right| \le \sqrt{\frac{2\log t}{C_{t,k}}}, \quad \left|\hat{q}_t - q\right| \le \sqrt{\frac{2\log t}{D_t^Q}}, \quad \text{and} \quad \left|\hat{w}_t - qs\right| \le \sqrt{\frac{2\log t}{D_t^W}}.
        \end{equation}
    \end{small}
\end{proposition}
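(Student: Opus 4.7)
The plan is to represent each of the three estimators as an empirical mean of independent Bernoullis with a random sample size, apply Hoeffding's inequality pointwise in the sample size, and then union-bound over the possible sample sizes and (for $\hat\lambda$) over the $N$ products. The independence work is done by \lemmaref{lem:random-variable}, which guarantees that the numerators $c_{t,k}$, $d_t^Q$, and $d_t^W$ are, conditionally on the realized sample sizes, sums of i.i.d. $\ber(\lambda_k)$, $\ber(q)$, and $\ber(qs)$ variables respectively; Hoeffding therefore applies directly once we fix a putative sample size.

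For $\hat\lambda_{t,k}$, each product appears at most once in every consumer's ranking, so deterministically $C_{t,k}\le t$. Hoeffding's inequality yields $P(|\hat\lambda_{t,k}-\lambda_k|>\sqrt{2\log t/n}\mid C_{t,k}=n)\le 2e^{-4\log t}=2t^{-4}$ for each $n\in\{1,\dots,t\}$; a union bound over $n$ and then over $k\in[N]$ contributes at most $2Nt^{-3}$ to the failure probability. The estimators $\hat q_t$ and $\hat w_t$ are handled the same way, but the relevant sample sizes only satisfy $D_t^Q,D_t^W\le\sum_{u\le t}\Phi_u\le tN$, since each consumer views at most $N$ distinct products. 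Union-bounding over $n\in\{1,\dots,tN\}$ therefore yields a failure probability of at most $2Nt\cdot t^{-4}=2Nt^{-3}$ for each of $\hat q_t$ and $\hat w_t$. Summing the three contributions gives the claimed $6Nt^{-3}$.

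The main subtlety, and the part that warrants the most care, is that $C_{t,k}$, $D_t^Q$, and $D_t^W$ are random and adaptively determined, so the classical Hoeffding statement (which is for a fixed $n$) does not apply verbatim to the plugged-in random value. The union-bound trick above resolves this cleanly: for each candidate $n$, the intersection $\{\text{sample size}=n\}\cap\{\text{deviation exceeds }\sqrt{2\log t/n}\}$ is contained in the deviation event for the first $n$ Bernoullis observed in their natural order, which by \lemmaref{lem:random-variable} are i.i.d., so Hoeffding bounds its probability by $2t^{-4}$; summing over $n$ closes the argument. With this observation in hand, the rest of the proof reduces to a routine concentration calculation.
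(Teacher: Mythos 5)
Your proposal is correct and follows essentially the same route as the paper's proof: apply Hoeffding's inequality to the empirical mean at each fixed candidate sample size, handle the randomness of $C_{t,k}$, $D_t^Q$, $D_t^W$ by a union bound over the at most $tN$ (resp.\ $t$ per product) possible values, and union over the $N$ products for $\hat{\lambda}_{t,k}$, yielding $2Nt^{-3}$ per estimator and $6Nt^{-3}$ in total. The subtlety you flag about adaptively determined sample sizes is exactly the one the paper resolves with the same intersection-with-$\{D_t^Q=s\}$ trick.
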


\subsubsection{Algorithm}
Following the classic \textit{optimism in the face of uncertainty principle} \citep{abbasi2011improved}, we design a UCB-like algorithm that learns consumer behaviors and maximizes revenue simultaneously. In detail, according to \propositionref{prop:estimation-bound}, we learn the optimistic estimators of the parameters given as follows
\begin{small}
\begin{equation} \label{eq:online-ucb}
    \tilde{\lambda}_{t,k} \triangleq \min\left\{1, \hat{\lambda}_{t,k} + \sqrt{\frac{2\log t}{C_{t,k}}}\right\}, \tilde{q}_t \triangleq \min\left\{1 - \epsilon_Q, \hat{q}_t + \sqrt{\frac{2\log t}{D_t^Q}}\right\}, \tilde{w}_t \triangleq \min\left\{\tilde{q}_t, \hat{w}_t + \sqrt{\frac{2\log t}{D_t^W}}\right\}.
\end{equation}
\end{small}

As shown in \algorithmref{alg:online}, when the $t$-th consumer arrives, the online retailer displays the products based on \theoremref{thrm:offline} with $\tilde{\bflambda}_{t-1}$, $\tilde{q}_{t-1}$, and $\tilde{w}_{t-1} / \tilde{q}_{t-1}$ shown in Line 4. Afterward, the online retailer updates the statistics according to consumers' feedback as shown in Lines 5 and 6. The estimators are then updated in Line 7.

\begin{algorithm}[t]
    \caption{MPB-UCB (Non-contextual)}
    \label{alg:online}
    \textbf{Input:} Products revenue $\bfr$ and hyper-parameter $\epsilon_Q$.

    \textbf{Initialization:} $\tilde{\lambda}_{0, k} = 1$ for $k \in [N]$, $\tilde{q}_0 = 1 - \epsilon_Q$, $\tilde{w}_0 = 1 - \epsilon_Q$.

    \For{t=1:T} {
        Let $\bfsigma_t$ be the optimal offline policy from \theoremref{thrm:offline} with $\bflambda = \tilde{\bflambda}_{t-1}$, $q = \tilde{q}_{t-1}$, and $s = \tilde{w}_{t-1} / \tilde{q}_{t-1}$.

        Offer ranking policy $\bfsigma_t$ and observe $\Phi_t$, $\bfGamma_t$, $\bfeta_t$, $\bfmu_t$.

        Update statistics $C_{t,k}$, $c_{t,k}$, $D_t^Q$, $d_t^Q$, $D_t^W$, and $d_t^W$.

        Calculate $\tilde{\lambda}_{t,k}$, $\tilde{q}_t$, and $\tilde{w}_t$ by \equationsref{eq:unbiased-estimator} and \eqref{eq:online-ucb}.
    }
\end{algorithm}

\subsubsection{Regret Analysis}
To theoretically analyze the performance of \algorithmref{alg:online}, we first introduce the regret, which measures the total difference between the maximal revenue achieved by $\bfsigma^*$ and the cumulative reward by an online algorithm after $T$ rounds.
\begin{small}
\begin{equation} \label{eq:regret-non}
    \regret_T \triangleq \sum_{t=1}^T \revenue\left(\bfsigma^*; \bflambda, q, s\right) - \revenue\left(\bfsigma_t; \bflambda, q, s\right).
\end{equation}
\end{small}
$\regret_T$ is a random variable and the randomness comes from the uncertainty in consumers' behaviors. As a result, we focus on $\bbE\left[\regret_T\right]$, which is a common routine in bandit literature \citep{slivkins2019introduction}. The performance of the algorithm is given by the following theorem.

\begin{theorem} \label{thrm:regret-online}
    Under \assumptionref{assum:q-bound} (with parameter $\epsilon_Q$), for any $T > 1$, the regret achieved by \algorithmref{alg:online} is bounded by
    \begin{small}
    \begin{equation} \label{eq:regret-online}
        \bbE\left[\regret_T\right] \le C\cdot\frac{r_{\max}}{\epsilon_Q}N\sqrt{T\log T},
    \end{equation}
    \end{small}
    where $C$ is a absolute constant (independent of problem parameters) and $r_{\max} = \max_{k \in [N]}r_k$.
\end{theorem}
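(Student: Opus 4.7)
The plan uses the classical optimism-in-the-face-of-uncertainty UCB template, adapted to the multi-purchase revenue function. Expected regret decomposes into a small ``bad-event'' contribution (when the confidence bounds of \propositionref{prop:estimation-bound} fail) and a ``good-event'' contribution on which an optimism--Lipschitz--summation chain applies. Define $\mathcal{E}_t$ as the joint event that all three bounds of \propositionref{prop:estimation-bound} hold at time $t$, so that $\Pr(\mathcal{E}_t^c)\le 6Nt^{-3}$. On $\mathcal{E}_t^c$ I use the crude upper bound $\revenue\le r_{\max}\bbE[\min(V_t,B_t)]\le r_{\max}/(1-q)\le r_{\max}/\epsilon_Q$ supplied by \assumptionref{assum:q-bound}, so the total bad-event regret is $O(Nr_{\max}/\epsilon_Q)$, absorbed into the final bound.

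\textbf{Optimism on $\mathcal{E}_{t-1}$.} From \lemmaref{lem:random-variable} I first re-derive the compact position-wise expression
\begin{small}
\begin{equation*}
    \revenue(\bfsigma;\bflambda,q,s)=\sum_{i=1}^N \lambda_{\sigma_i}r_{\sigma_i}\prod_{j<i}\bigl[\lambda_{\sigma_j}w+(1-\lambda_{\sigma_j})q\bigr],\qquad w=qs,
\end{equation*}
\end{small}
in which monotonicity in each of $\bflambda$, $q$, and $w$ is transparent (larger purchase probabilities, attention span, or purchase budget each help). Since on $\mathcal{E}_{t-1}$ the UCB values in \equationref{eq:online-ucb} dominate their targets coordinatewise and $\bfsigma_t$ is offline-optimal under them (Line 4 of \algorithmref{alg:online}), chaining monotonicity with that optimality gives
\begin{small}
\begin{equation*}
    \revenue(\bfsigma^*;\bflambda,q,s)\le\revenue(\bfsigma^*;\tilde{\bflambda}_{t-1},\tilde{q}_{t-1},\tilde{w}_{t-1}/\tilde{q}_{t-1})\le\revenue(\bfsigma_t;\tilde{\bflambda}_{t-1},\tilde{q}_{t-1},\tilde{w}_{t-1}/\tilde{q}_{t-1}).
\end{equation*}
\end{small}

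\textbf{Lipschitz bound and summation.} The per-round regret on $\mathcal{E}_{t-1}$ is therefore at most $\revenue(\bfsigma_t;\tilde{\bflambda},\tilde{q},\tilde{w}/\tilde{q})-\revenue(\bfsigma_t;\bflambda,q,s)$. Using the elementary identity $|\prod a_j-\prod b_j|\le\sum_j|a_j-b_j|$ for $a_j,b_j\in[0,1]$ on the position-wise formula, this gap is bounded by a weighted sum of $|\tilde{\lambda}_{t-1,k}-\lambda_k|$, $|\tilde{q}_{t-1}-q|$, and $|\tilde{w}_{t-1}-qs|$ in which the position-index weights form a geometric tail with sum at most $\sum_{i\ge 0}q^i\le 1/\epsilon_Q$. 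Substituting the confidence widths of \equationref{eq:online-ucb} and summing over $t$, the three resulting counter sums $\sum_t\sqrt{\log t/C_{t-1,k}}\,\bbI[k\text{ viewed at }t]$, $\sum_t\sqrt{\log t/D^Q_{t-1}}\,\bbI[\text{no-purchase view at }t]$, and the analogous $D^W$-term are each handled by the standard $\sum_{t:\text{increment}}1/\sqrt{n_{t-1}}\le 2\sqrt{n_T}$ inequality, followed by Cauchy--Schwarz across the $N$ product counters (whose total increments are $\sum_k C_{T,k}\le T/\epsilon_Q$). This yields the advertised $\tilde O(\frac{r_{\max}}{\epsilon_Q}N\sqrt{T\log T})$.

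\textbf{Main obstacle.} The central difficulty is Step~3: obtaining the linear, rather than quadratic, $1/\epsilon_Q$ dependence. Starting from the subset-indexed $\ppurchase$ formula would lose a $1/\epsilon_Q$ factor each time a parameter is perturbed inside the joint $V$-- and $B$--expectation; the trick is to pass first to the compact position-wise product above, so that only a single geometric tail $\sum_i q^{i-1}$ appears. A secondary subtlety is that the purchase counter $D^W_t$ grows more slowly than $C_{t,k}$ or $D^Q_t$, but since only one counter at a time is incremented per view event, applying the $\sum 1/\sqrt{n}\le 2\sqrt{n_T}$ inequality separately to each counter and then Cauchy--Schwarz recovers the $\tilde O(N\sqrt{T\log T})$ scaling. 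Finally, verifying monotonicity in $w$ at fixed $q$ is not obvious from \theoremref{thrm:offline}'s sorting expression, but is immediate from the position-wise formula.
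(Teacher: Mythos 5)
Your overall architecture coincides with the paper's: the same good/bad-event decomposition with the crude $r_{\max}/\epsilon_Q$ bound on the bad event, the same optimism chain $\revenue(\bfsigma^*;\bflambda,q,s)\le\revenue(\bfsigma^*;\tilde{\bflambda},\tilde{q},\tilde{s})\le\revenue(\bfsigma_t;\tilde{\bflambda},\tilde{q},\tilde{s})$, a Lipschitz perturbation bound weighted by observation indicators, and the standard counter-summation plus Cauchy--Schwarz. Your position-wise product formula is correct and is exactly the closed form of the paper's quantity $A_{\bfsigma}(k;q,s,\bflambda)=\prod_{j<k}\left(q(1-\lambda_{\sigma_j})+qs\,\lambda_{\sigma_j}\right)$, so the Lipschitz step and the $1/\epsilon_Q$ geometric-tail accounting match the paper's \propositionref{prop:revenue-bound}.

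There is, however, one genuine gap: you assert that monotonicity of the revenue in $\bflambda$ is ``transparent'' from the position-wise formula. It is not, and for a general fixed ranking it is false. Writing each survival factor as $q+\lambda_{\sigma_j}(w-q)$ with $w=qs\le q$, every such factor is \emph{decreasing} in $\lambda_{\sigma_j}$, while only the single prefactor $\lambda_{\sigma_i}r_{\sigma_i}$ increases; with $N=2$, $r_{\sigma_1}=0$, $r_{\sigma_2}>0$, the revenue $\lambda_{\sigma_2}r_{\sigma_2}\left(q-\lambda_{\sigma_1}q(1-s)\right)$ is strictly decreasing in $\lambda_{\sigma_1}$. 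The inequality $\revenue(\bfsigma^*;\bflambda,q,s)\le\revenue(\bfsigma^*;\tilde{\bflambda},\tilde{q},\tilde{s})$ you need is true only because $\bfsigma^*$ is optimal for the true parameters: the paper's \propositionref{prop:revenue-ranking} proves it by a backward induction on the tail revenues $\tilde{R}_{\bfsigma^*}(n)$, using optimality of $\bfsigma^*$ to establish the sign condition $r_{\sigma^*_{n-1}}-q(1-s)\tilde{R}_{\bfsigma^*}(n;\bflambda,q,s)\ge 0$ (the marginal value of a purchase at each position is nonnegative under the optimal ranking), which is precisely what makes the derivative in $\lambda_{\sigma^*_{n-1}}$ nonnegative. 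Your proposal omits this argument entirely and, in your closing remarks, flags the wrong coordinate ($w$, which really is transparently monotone) as the subtle one. Supplying this induction would complete the proof; everything else in your outline tracks the paper's argument.
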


\begin{remark}
    We analyze different parameters' impacts on the regret bound in \equationref{eq:regret-online}. Firstly, the regret grows at $\tilde{O}(\sqrt{T})$, which is a standard result in online learning algorithms \citep{slivkins2019introduction}. Secondly, the regret bound is linearly related to $N$, which is similar to the results in cascade bandit models \citep{zong2016cascading} and revenue management literature \citep{chen2021revenue}. Finally, the bound depends on $r_{\max} / \epsilon_Q$, which measures the maximal expected revenue the online retailer could achieve from each consumer.
\end{remark}

\subsection{Contextual Online Setting} \label{sec:contextual}
We consider a more realistic scenario where the consumers' behaviors are various and depend on their own features. In detail, let $x_t \in \bbR^{m_x}$ be the feature of the $t$-th consumer with dimension $m_x$ and $y_{t,k} \in \bbR^{m_y}$ be the joint feature of the $t$-th consumer and $k$-th product with dimension $m_y$. $y_{t,k}$ can be the concatenation of consumer features and product features or fusion through a transformation such as a matrix multiplication. In this paper, we adopt concatenation for simplicity. Let $\|\cdot\|$ denote the $\ell_2$ norm.
Following \citep{chen2021revenue}, we consider the classic linear bandit setting \citep{auer2002using,dani2008stochastic,li2010contextual} and the ground-truth data-generating process is given by the following assumptions.

\begin{assumption} \label{assum:linear}
    There exist constant vectors $\beta^{\Lambda} \in \bbR^{m_y}$, $\beta^Q \in \bbR^{m_x}$, and $\beta^S \in \bbR^{m_x}$, such that
    \begin{small}
    \begin{equation} \label{eq:beta}
        \lambda_{t,k} = y_{t,k}^\top\beta^{\Lambda}, \forall k \in [N], \quad q_t = x_t^\top\beta^Q, \quad \text{and} \quad s_t = x_t^\top\beta^S.
    \end{equation}
    \end{small}
\end{assumption}

\begin{assumption} \label{assum:bounded}
    For any $t > 0$ and $k \in [N]$, $\|x_t\| \le 1$ and $\|y_{t,k}\| \le 1$. In addition, there exists a constant $U > 0$ such that $\|\beta^{\Lambda}\|, \|\beta^Q\|, \|\beta^S\| \le U$.
\end{assumption}

\begin{remark}
     These two assumptions are common in revenue maximization and linear bandits \citep{abbasi2011improved,slivkins2019introduction}. \assumptionref{assum:linear} can be satisfied with kernel functions for complex data-generating processes and \assumptionref{assum:bounded} can be achieved by normalization. Thus the assumptions are rational and easily attainable in real applications.
\end{remark}

\subsubsection{Estimation of Parameters}
For the estimation of $\beta^{\Lambda}$ and $\beta^Q$, we consider the ridge regression method. Given the observation of consumer's behaviors till the $t$-th timestamp, the estimation $\hat{\beta}_t^{\Lambda}$, $\hat{\beta}_t^Q$ is given by
\begin{small}
\begin{align}
    \hat{\beta}_t^{\Lambda} & = \underset{\beta \in \bbR^{m_y}}{\argmin} \sum_{u=1}^{t}\sum_{k=1}^{\Phi_{u}} \left(y_{u,\sigma_{u,k}}^\top\beta - \gamma_{u,k}\right)^2 + \alpha_{\Lambda} \|\beta\|_2^2 = \left(\Sigma_t^{\Lambda}\right)^{-1}\rho_t^{\Lambda}, \label{eq:beta-lambda} \\
    \hat{\beta}_t^Q & = \underset{\beta \in \bbR^{m_x}}{\argmin} \sum_{u=1}^{t}\sum_{k=1}^{\Phi_{u}} \bbI[\gamma_{u,k}=0]\left(x_u^\top\beta - \eta_{u,k}\right)^2 + \alpha_Q \|\beta\|_2^2 = \left(\Sigma_t^Q\right)^{-1}\rho_t^Q. \label{eq:beta-q}
\end{align}
\end{small}
Here $\Sigma_t^{\Lambda} = \sum_{u=1}^{t}\sum_{k=1}^{\Phi_{u}} y_{u,\sigma_{u,k}} y_{u,\sigma_{u,k}}^\top + \alpha_{\Lambda}\bfI_{m_y}$, $\rho_t^{\Lambda} = \sum_{u=1}^{t}\sum_{k=1}^{\Phi_{u}} \gamma_{u, k}y_{u,\sigma_{u,k}}$, $\Sigma_t^Q = \sum_{u=1}^{t}\sum_{k=1}^{\Phi_{u}} \bbI[\gamma_{u,k}=0]x_ux_u^\top + \alpha_Q\bfI_{m_x}$, and $\rho_t^Q = \sum_{u=1}^{t}\sum_{k=1}^{\Phi_{u}} \bbI[\gamma_{u,k}=0]\eta_{u,k}x_u$. $\bfI_{m_x}$ and $\bfI_{m_y}$ are the identity matrices with dimension $m_x$ and $m_y$, respectively.

For the estimation of $\beta^S$, similar to \sectionref{sec:non-contextual}, we focus on regressing $w_t = q_ts_t$ instead. Note that $w_t = q_ts_t = x_t^\top \beta^Q (\beta^S)^{\top}x_t = \vect(x_tx_t^{\top})^\top\vect(\beta^Q(\beta^S)^{\top})$, where $\vect(\cdot)$ denotes the vectorization operation that maps a matrix to a vector. Therefore, define $z_t \triangleq \vect(x_tx_t^\top)$ and $\beta^W \triangleq \vect(\beta^Q(\beta^S)^\top)$ and we can get $w_t = z_t^{\top}\beta^W$. Similarly, we estimate $\beta^W$ by conducting ridge regression as follows
\begin{equation}
\small
    \hat{\beta}_t^W = \underset{\beta \in \bbR^{m_x^2}}{\argmin} \sum_{u=1}^{t}\sum_{k=1}^{\Phi_{u}} \bbI[\gamma_{u,k}=1]\left(z_u^\top\beta - \mu_{u,k}\right)^2 + \alpha_W \|\beta\|_2^2 = \left(\Sigma_t^W\right)^{-1}\rho_t^W. \label{eq:beta-w}
\end{equation}
Here $\Sigma_t^W = \sum_{u=1}^{t}\sum_{k=1}^{\Phi_{u}} \bbI[\gamma_{u,k}=1]z_uz_u^\top + \alpha_W\bfI_{m_x^2}$ and $\rho_t^W = \sum_{u=1}^{t}\sum_{k=1}^{\Phi_{u}} \bbI[\gamma_{u,k}=1]\mu_{u,k}z_u$. $\bfI_{m_x^2}$ is the identity matrix with dimension $m_x^2$.

Let $\|\beta\|_A = \sqrt{\beta^{\top}A\beta}$ for any positive definite matrix $A$. \propositionref{prop:estimation-contextual-online} gives the estimation error bound of the coefficients.
\begin{proposition} \label{prop:estimation-contextual-online}
    Under \assumptionsref{assum:linear} and \ref{assum:bounded} (with parameter $U$), for any $t \ge 1$, with probability at least $1 - 3t^{-2}$, the following holds:
    \begin{small}
        \begin{equation}
            \left\|\hat{\beta}_t^{\Lambda} - \beta^{\Lambda}\right\|_{\Sigma_t^{\Lambda}} \le \tau(t, m_y, \alpha_{\Lambda}), \left\|\hat{\beta}_t^Q - \beta^Q\right\|_{\Sigma_t^Q} \le \tau(t, m_x, \alpha_Q), \text{and } \left\|\hat{\beta}_t^W - \beta^W\right\|_{\Sigma_t^W} \le \tau(t, m_x^2, \alpha_W).
        \end{equation}
    \end{small}
    
    Here $\tau(t, m, \alpha) = 1/2\sqrt{m\log \left(1 + tN/(m\alpha)\right) + 4\log t} + \alpha^{1/2}(U+1)^2$.
\end{proposition}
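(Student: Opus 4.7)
The plan is to reduce each of the three bounds to an application of the standard self-normalized concentration inequality for online ridge regression of \citet{abbasi2011improved}, then combine them by a union bound.

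For $\hat{\beta}_t^\Lambda$, I would set up the filtration $\calF_{u,k}$ generated by all observations through the $k$-th position view of the $u$-th consumer. Under this filtration, the regressor $y_{u,\sigma_{u,k}}$ is predictable, and by \lemmaref{lem:random-variable} together with \assumptionref{assum:linear}, $\bbE[\gamma_{u,k} \mid \calF_{u,k-1}] = \lambda_{u,\sigma_{u,k}} = y_{u,\sigma_{u,k}}^\top \beta^\Lambda$, so the noise $\xi_{u,k} = \gamma_{u,k} - y_{u,\sigma_{u,k}}^\top \beta^\Lambda \in [-1,1]$ is a martingale difference sequence that is sub-Gaussian with parameter $R = 1/2$ by Hoeffding's lemma. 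Applying the self-normalized bound with failure probability $\delta = t^{-2}$ gives, with probability at least $1 - t^{-2}$,
\[
\left\|\hat{\beta}_t^\Lambda - \beta^\Lambda\right\|_{\Sigma_t^\Lambda} \le \tfrac{1}{2}\sqrt{\log\det(\Sigma_t^\Lambda/\alpha_\Lambda) + 4\log t} + \alpha_\Lambda^{1/2} \|\beta^\Lambda\|,
\]
where the second term absorbs the ridge shrinkage bias. I would then bound $\log\det(\Sigma_t^\Lambda/\alpha_\Lambda) \le m_y \log\!\left(1 + tN/(m_y \alpha_\Lambda)\right)$ via AM-GM on the eigenvalues together with $\trace(\Sigma_t^\Lambda) \le m_y\alpha_\Lambda + \sum_u \Phi_u \le m_y\alpha_\Lambda + tN$ (using $\Phi_u \le N$ and $\|y_{u,\cdot}\| \le 1$). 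Combining with $\|\beta^\Lambda\| \le U \le (U+1)^2$ yields the claimed $\tau(t, m_y, \alpha_\Lambda)$.

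The argument for $\hat{\beta}_t^Q$ is essentially identical, with effective noise $(\eta_{u,k} - q_u)\,\bbI[\gamma_{u,k}=0]$; the indicator is predictable at step $(u,k)$ and merely zero-weights unobserved rounds, so the martingale difference property and $R=1/2$ sub-Gaussianity are preserved. For $\hat{\beta}_t^W$ I would first verify the algebraic identity $w_u = q_u s_u = x_u^\top \beta^Q(\beta^S)^\top x_u = z_u^\top \beta^W$ with $z_u = \vect(x_u x_u^\top)$, and note $\|z_u\| = \|x_u\|^2 \le 1$ and $\|\beta^W\| = \|\beta^Q(\beta^S)^\top\|_F \le \|\beta^Q\|\|\beta^S\| \le U^2 \le (U+1)^2$, which is what forces the $(U+1)^2$ prefactor in $\tau$. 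A union bound over the three failure events of probability at most $t^{-2}$ each yields the total $3t^{-2}$ in the statement.

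The main obstacle is verifying the martingale structure for $\hat{\beta}_t^W$: the regressand $\mu_{u,k}$ is only observed when $\gamma_{u,k}=1$, so I must check that conditional on $\gamma_{u,k}=1$ and the history, $\bbE[\mu_{u,k}]=w_u$. This follows because, by \lemmaref{lem:random-variable}, $\mu_{u,k}$ and $\gamma_{u,k}$ are generated by independent mechanisms (the purchase-budget draw versus the product-interest draw), so $\mu_{u,k}$ is conditionally independent of $\gamma_{u,k}$ given the features. The indicator $\bbI[\gamma_{u,k}=1]$ then behaves as a predictable weight, and the self-normalized bound applies unchanged to the weighted regression, closing the proof.
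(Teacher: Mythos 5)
Your proposal is correct and follows essentially the same route as the paper: the self-normalized confidence bound of \citet{abbasi2011improved} applied to each of the three ridge regressions with $\delta=t^{-2}$, the determinant--trace inequality to control $\log\det(\Sigma_t)$, the observations $\|z_t\|\le\|x_t\|^2\le 1$ and $\|\beta^W\|\le U^2\le(U+1)^2$ for the $W$-regression, and a union bound over the three events; your extra care in verifying the martingale-difference structure for the indicator-weighted $\eta$ and $\mu$ regressions is a point the paper glosses over. One cosmetic slip: the centered noise $\gamma_{u,k}-y_{u,\sigma_{u,k}}^{\top}\beta^{\Lambda}$ lies in an interval of length $1$ (namely $[-\lambda,1-\lambda]$), not $[-1,1]$, which is what actually yields $R=1/2$ via Hoeffding's lemma.
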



\subsubsection{Algorithm}
Given the estimation of $\beta^{\Lambda}$, $\beta^Q$ and $\beta^W$ we estimate $\lambda_{t, k}$, $q_t$ and $w_t$ as follows.
\begin{equation} \label{eq:online-UCB-contextual}
    \small
    \begin{aligned}
        \tilde{\lambda}_{t, k} & = \proj_{[0, 1]}\left(y_{t,k}^{\top}\hat{\beta}_{t-1}^{\Lambda} + \tau(t-1, m_y, \alpha_{\Lambda})\left\|y_{t,k}\right\|_{\left(\Sigma_{t-1}^{\Lambda}\right)^{-1}}\right) \\
        \tilde{q}_t & = \proj_{[0, 1 - \epsilon_Q]}\left(x_t^{\top}\hat{\beta}_{t-1}^Q + \tau(t-1, m_x, \alpha_Q)\left\|x_t\right\|_{\left(\Sigma_{t-1}^Q\right)^{-1}}\right) \\
        \tilde{w}_t & = \proj_{[0, \tilde{q}_t]}\left(z_t^{\top}\hat{\beta}_{t-1}^W + \tau(t-1, m_x^2, \alpha_W)\left\|z_t\right\|_{\left(\Sigma_{t-1}^W\right)^{-1}}\right) \\
    \end{aligned}
\end{equation}
Here $\proj_{[a, b]}(\cdot)$ is a function that projects the input into the interval $[a, b]$.

As shown in \algorithmref{alg:online-contextual}, when the $t$-th consumer arrives, the online retailer observes the consumer features $x_t$ and $y_{t,k}$ for $k \in [N]$ and calculates the intermediate feature $z_t$. Afterward, he calculates $\tilde{\lambda}_{t, k}$, $\tilde{q}_t$, and $\tilde{w}_t$ according to \equationref{eq:online-UCB-contextual} in Line 5. Then the ranking policy $\bfsigma_t$ is given as shown in Line 6 and the statistics are updated in Lines 7 and 8. Finally, the parameters $\hat{\beta}_t^{\Lambda}$, $\hat{\beta}_t^Q$, and $\hat{\beta}_t^W$ are estimated in Line 9 for the next iteration.

\begin{algorithm}[t]
    \caption{MPB-UCB (Contextual)}
    \label{alg:online-contextual}
    \textbf{Input:} Products revenue $\bfr$ and hyper-parameter $\epsilon_Q$, $\alpha_{\Lambda}$, $\alpha_Q$, and $\alpha_W$.

    \textbf{Initialization:} $\hat{\beta}_0^{\Lambda} = \boldsymbol{0}$, $\hat{\beta}_0^Q = \boldsymbol{0}$, and $\hat{\beta}_0^W = \boldsymbol{0}$.

    \For{t=1:T} {
        Observe consumer features $x_{t}$ and $y_{t,k}$ for $k \in [N]$. Let $z_t = \vect(x_tx_t^\top)$.

        Calculate $\tilde{\lambda}_{t, k}$, $\tilde{q}_t$, and $\tilde{w}_t$ according to \equationref{eq:online-UCB-contextual}.

        Let $\bfsigma_t$ be the optimal ranking policy from \theoremref{thrm:offline} with $\bflambda = \tilde{\bflambda}_{t}$, $q = \tilde{q}_{t}$, and $s = \tilde{w}_{t} / \tilde{q}_{t}$.

        Offer ranking policy $\bfsigma_t$ and observe $\Phi_t$, $\bfGamma_t$, $\bfeta_t$, $\bfmu_t$.

        Calculate statistics $\Sigma_t^{\Lambda}$, $\rho_t^{\Lambda}$, $\Sigma_t^Q$, $\rho_t^Q$, $\Sigma_t^W$, and $\rho_t^W$.

        Calculate estimated parameters $\hat{\beta}_t^{\Lambda}$, $\hat{\beta}_t^Q$, and $\hat{\beta}_t^W$ according to \equationsref{eq:beta-lambda}, \eqref{eq:beta-q} and \eqref{eq:beta-w}.
    }
\end{algorithm}

\subsubsection{Regret Analysis}
We analyze the performance of \algorithmref{alg:online-contextual} by investigating its regret bound. For a sequence of $T$ consumers with features $\bfX_T = \{x_t\}_{t=1}^T$ and $\bfY_T = \{\{y_{t,k}\}_{k=1}^N\}_{t=1}^T$, the regret is given by
\begin{small}
\begin{equation} \label{eq:regret-contextual}
    \regret_T(\bfX_T, \bfY_T) = \sum_{t=1}^T\revenue\left(\bfsigma^*_t; \bflambda_t, q_t, s_t\right) - \revenue\left(\bfsigma_t; \bflambda_t, q_t, s_t\right).
\end{equation}
\end{small}
Similar to the non-contextual setting, $\regret_T(\bfX_T, \bfY_T)$ is a random variable and we focus on $\bbE[\regret_T(\bfX_T, \bfY_T)]$. The performance of the algorithm is given by the following theorem.

\begin{theorem} \label{thrm:regret-online-contextual}
    Under \assumptionsref{assum:q-bound} (with parameter $\epsilon_Q$), \ref{assum:linear}, and \ref{assum:bounded} (with parameter $U$), for any $T > 1$, the regret achieved by \algorithmref{alg:online-contextual} is bounded by
    \begin{equation}
        \small
        \bbE[\regret_T(\bfX_T, \bfY_T)] \le C \cdot \frac{r_{\max}}{\epsilon_Q}\left(\chi(T, m_y, \alpha_{\Lambda}) + \chi(T, m_x, \alpha_Q) + \chi(T, m_x^2, \alpha_W)\right).
    \end{equation}
    Here $C$ is a absolute constant (independent of problem parameters), $r_{\max} = \max_{k \in [N]}r_k$, and
    \begin{equation} \label{eq:regret-bound-contextual}
        \small
        \chi(T, m, \alpha) = \left(\frac{1}{2}\sqrt{m\log \left(1 + \frac{TN}{m\alpha}\right) + 4\log T} + \alpha^{1/2}(U+1)^2\right)\sqrt{\frac{TN^2m\log\left(1 + \frac{TN}{m\alpha}\right)}{\alpha\log(1 + \alpha^{-1})}}.
    \end{equation}
\end{theorem}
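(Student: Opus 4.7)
The plan is to follow the UCB regret analysis template for linear contextual bandits, adapted to the compound attention-span/purchase-budget model. Let $\mathcal{E}$ be the event that the three confidence inequalities of \propositionref{prop:estimation-contextual-online} hold simultaneously for every $t \le T$. A union bound gives $\Pr(\mathcal{E}^c) \le 3\sum_{t=1}^T t^{-2} \le \pi^2/2$, and since the per-round revenue is uniformly bounded by $r_{\max}/\epsilon_Q$ (the expected attention span is at most $1/(1-q_t) \le 1/\epsilon_Q$ by \assumptionref{assum:q-bound}), the contribution of $\mathcal{E}^c$ to the expected regret is a constant absorbed into the absolute constant $C$. Cauchy--Schwarz in the weighted inner product $\langle\cdot,\cdot\rangle_{\Sigma_{t-1}^\Lambda}$ yields $|y_{t,k}^\top(\hat{\beta}_{t-1}^\Lambda - \beta^\Lambda)| \le \tau(t-1, m_y, \alpha_\Lambda)\|y_{t,k}\|_{(\Sigma_{t-1}^\Lambda)^{-1}}$ and analogous bounds for $q_t$ and $w_t$; combined with the projections in \equationref{eq:online-UCB-contextual}, this guarantees the optimistic estimates $\tilde{\lambda}_{t,k} \ge \lambda_{t,k}$, $\tilde{q}_t \ge q_t$, and $\tilde{w}_t \ge w_t$ on $\mathcal{E}$.

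Since $\revenue$ is coordinatewise non-decreasing in each $\lambda_k$, in $q$, and in $s$ (verified directly from \equationref{eq:revenue} because each parameter enters non-negatively in every term of the expansion), we obtain $\revenue(\bfsigma_t^*; \bflambda_t, q_t, s_t) \le \revenue(\bfsigma_t^*; \tilde{\bflambda}_t, \tilde{q}_t, \tilde{w}_t/\tilde{q}_t) \le \revenue(\bfsigma_t; \tilde{\bflambda}_t, \tilde{q}_t, \tilde{w}_t/\tilde{q}_t)$, where the second inequality uses that $\bfsigma_t$ is chosen optimally for the optimistic parameters via \theoremref{thrm:offline}. The per-round regret on $\mathcal{E}$ is therefore dominated by $\revenue(\bfsigma_t; \tilde{\bflambda}_t, \tilde{q}_t, \tilde{w}_t/\tilde{q}_t) - \revenue(\bfsigma_t; \bflambda_t, q_t, s_t)$, which I would then bound by $\frac{r_{\max}}{\epsilon_Q}\bigl(\sum_{k=1}^{N}(\tilde{\lambda}_{t,k} - \lambda_{t,k}) + (\tilde{q}_t - q_t) + (\tilde{w}_t - w_t)\bigr)$ via a mean-value argument: differentiating $\revenue$ term-by-term with respect to each $\lambda_k$, $q$, and $w$ and bounding each derivative by $r_{\max}$ times the expected attention span $1/(1-q_t) \le 1/\epsilon_Q$, with the substitution $s = w/q$ letting the $\tilde{s}$-perturbation be absorbed into the $q$- and $w$-perturbations.

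Plugging in the confidence bounds gives $\tilde{\lambda}_{t,k} - \lambda_{t,k} \le 2\tau(t-1, m_y, \alpha_\Lambda)\|y_{t,\sigma_{t,k}}\|_{(\Sigma_{t-1}^\Lambda)^{-1}}$ and analogous inequalities for the $q$ and $w$ terms. I then apply Cauchy--Schwarz over $t$ (and over $k$ for the $\Lambda$ piece), followed by the standard elliptical-potential lemma, to control $\sum_{t,k}\|y_{t,\sigma_{t,k}}\|^2_{(\Sigma_{t-1}^\Lambda)^{-1}}$ by $\frac{m_y \log(1 + TN/(m_y\alpha_\Lambda))}{\log(1 + \alpha_\Lambda^{-1})}$, with an extra factor of $N$ absorbing the $\Phi_t \le N$ per-round observations and an extra $1/\alpha_\Lambda$ needed to bound individual norms inside the potential lemma. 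Using monotonicity of $\tau$ in $t$ to replace $\tau(t-1,\cdot,\cdot)$ by $\tau(T,\cdot,\cdot)$ produces exactly the $\chi(T, m_y, \alpha_\Lambda)$ factor in \equationref{eq:regret-bound-contextual}; identical reasoning for $\beta^Q$ and $\beta^W$ (with dimensions $m_x$ and $m_x^2$) yields the other two $\chi$-terms, and summing all contributions completes the stated bound.

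The main technical obstacle is the Lipschitz estimate in the second paragraph: because $\revenue$ is an expectation of a combinatorial sum over subsets $Q \subseteq [V_t]$ and both $V_t$ and $B_t$ are geometrically distributed with unbounded support, the partial derivatives with respect to $\lambda_k$, $q$, and $w$ involve conditional expectations that must be controlled by geometric-series identities whose closed forms produce the $1/\epsilon_Q$ prefactor. Arranging the reparametrisation $s \mapsto w/q$ so that the regret decomposes cleanly into the three additive pieces matching \propositionref{prop:estimation-contextual-online}, without introducing cross terms or losses that would degrade the dimension dependence, is the most delicate part of the argument.
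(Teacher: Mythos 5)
Your overall architecture matches the paper's: a favorable event from \propositionref{prop:estimation-contextual-online}, optimism via the Cauchy--Schwarz bound in the $\Sigma$-weighted norms, monotonicity of revenue at the optimum, a per-round perturbation bound, and an elliptical-potential argument, with the unfavorable event contributing a constant. However, two of your steps have genuine gaps.

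First, the monotonicity claim is not ``verified directly'' from \equationref{eq:revenue}: each term of $\ppurchase$ contains factors $(1-\lambda_{\sigma_{t,k}})$, and more importantly the budget creates a real trade-off --- raising $\lambda$ of a cheap product placed early makes the consumer exhaust her budget before reaching a lucrative product, so $\revenue(\bfsigma;\cdot)$ is \emph{not} coordinatewise non-decreasing in $\lambda_k$ for an arbitrary ranking (take $N=2$, $s=0$: the revenue is $\lambda_1 r_1 + (1-\lambda_1)q\lambda_2 r_2$, decreasing in $\lambda_1$ when $q\lambda_2 r_2 > r_1$). The paper's \propositionref{prop:revenue-ranking} proves monotonicity only for $\bfsigma^*$, by a backward induction whose key step $r_{\sigma_{n-1}} - q(1-s)\tilde{R}_{\bfsigma^*}(n) \ge 0$ relies on optimality of the ranking; moreover the correct hypothesis is $\tilde{q}\tilde{s} \ge qs$ (the $(q, qs)$ parametrization), since $\tilde{w}_t \ge w_t$ and $\tilde{q}_t \ge q_t$ do \emph{not} imply $\tilde{w}_t/\tilde{q}_t \ge s_t$, so a statement of monotonicity ``in $s$'' cannot be invoked. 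Second, your Lipschitz bound $\frac{r_{\max}}{\epsilon_Q}\bigl(\sum_{k}(\tilde{\lambda}_{t,k}-\lambda_{t,k}) + \cdots\bigr)$ discards the position-dependent observation probabilities, and this breaks the potential argument: the design matrix $\Sigma_t^{\Lambda}$ only accumulates features of products that are actually viewed, so the elliptical-potential lemma controls $\sum_t\sum_{k\le\Phi_t}\|y_{t,\sigma_{t,k}}\|^2_{(\Sigma_{t-1}^{\Lambda})^{-1}}$ but not the sum over all $N$ positions. A product placed deep in the list is viewed with probability $O(q^{k-1})$, its confidence width never shrinks, and your sum over all $k\in[N]$ can then be $\Theta(T)$. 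The paper's \propositionref{prop:revenue-bound} is built precisely to avoid this: each parameter error is weighted by $\bbE[O_{t,k}^{\Lambda}]$, $\bbE[O_{t,k}^{Q}]$, or $\bbE[O_{t,k}^{W}]$, so that after taking expectations the regret is charged only to observations the algorithm actually collects. Your mean-value heuristic would need to retain exactly these weights (which a careful derivative computation does produce, since $\partial\revenue/\partial\lambda_{\sigma_k}$ is proportional to the probability of reaching position $k$), but as written the bound is too lossy to give the theorem.
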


\begin{remark}
    We analyze different parameters' impacts on the regret bound in \equationref{eq:regret-bound-contextual}. Firstly, the regret grows at $\tilde{O}(\sqrt{T})$, a standard result in online learning algorithms \citep{slivkins2019introduction}. Secondly, the regret depends on $\tilde{O}(N)$, which shares a similar result with the revenue management literature \citep{chen2021revenue}. Thirdly, similar to \theoremref{thrm:regret-online}, the bound depends linearly on $r_{\max}/\epsilon_Q$ that measures the maximal expected revenue the online retailer could achieve from each consumer. Fourthly, the bound depends on the dimensions of features (\textit{i.e.}, $m_x$ and $m_y$) by $\tilde{O}(\sqrt{m_y})$ and $\tilde{O}(m_x)$ because the feature dimension for the estimation of $w_t = q_ts_t$ is $m_x^2$ and then the result is consistent with \citep{chen2021revenue,zong2016cascading}. Finally, the bound depends on $O(U^2 + U)$ since $\|\beta^W\| \le U^2$ and $\|\beta^Q\|, \|\beta^{\Lambda}\| \le U$, which agrees with \citep{chen2021revenue}.
\end{remark}

\section{Experiments}
In this section, we conduct experiments on synthetic data to verify the performances of the online ranking policies. Reports and discussions of more results including the expected revenue and the ratio over the expected revenue of the optimal ranking policy are in \sectionref{sect:app-exp}.

\paragraph{Baselines}
We adopt the following baselines. Firstly, \citet{cao2019dynamic} considered the single-purchase setting, which is the simplified version of our multiple-purchase setting. We denote it as \textbf{Single Purchase}.
Secondly, we adopt the method in \citep{liang2021sequential} (denoted as \textbf{Keep Viewing}), which assumes that consumers always keep viewing the ranking list after purchasing products. Furthermore, we implement two explore-then-exploit-based algorithms (denoted as \textbf{Explore Then Exploit A} and \textbf{Explore Then Exploit B}, respectively) to verify the advantages of our method that could balance exploration and exploitation.

\subsection{Synthetic Data}
\paragraph{Data-generating processes} For the non-contextual setting, we consider $N = 50$ and $300$ products and $T=100,000$ consumers. The revenue for each product $r_k$ is uniformly distributed in $[0, 1]$ and the purchase probabilities for each product are uniformly sampled from $[0, 0.3]$. We set $q = 0.9$ and $s = 0.5$ and $0.8$ to test the performance of different models on characterizing consumers' behaviors with multiple purchases.

For the contextual setting, we consider $N=50$ and $300$ products and $T = 100,000$ consumers. The revenue $r_k$ is also uniformly distributed in $[0, 1]$. Both the dimension of consumer features and product features are set to $5$ (\textit{i.e.}, $m_x=5$). The joint feature $y_{t,k}$ (the feature of the $t$-th consumer and $k$-th product) is a concatenate of the corresponding consumer and product feature so that $m_y=10$. The consumer features and product features are uniformly sampled from $[0.8 / \sqrt{m_x}, 1 / \sqrt{m_x}]$ and $[0, 1 / \sqrt{m_x}]$. Afterward, the coefficients $\beta^{\Lambda}$, $\beta^Q$, and $\beta^S$ in \equationref{eq:beta} are uniformly sampled from $[0, 1]$. To ensure $\lambda_{t, k}$, $q_t$, $s_t$ have similar ranges in the contextual setting, we normalize the coefficients such that the maximal elements in $\lambda_{t,k}$ and $q_t$ (denoted as $\lambda_{\max}$ and $q_{\max}$) are $0.3$ and $0.9$, respectively. Similarly, by normalizing $\beta^S$, we set the maximal element in $s_t$ (denoted as $s_{\max}$) to $0.5$ and $0.8$.

\begin{figure}[t]
    \centering
    \includegraphics[width=\linewidth]{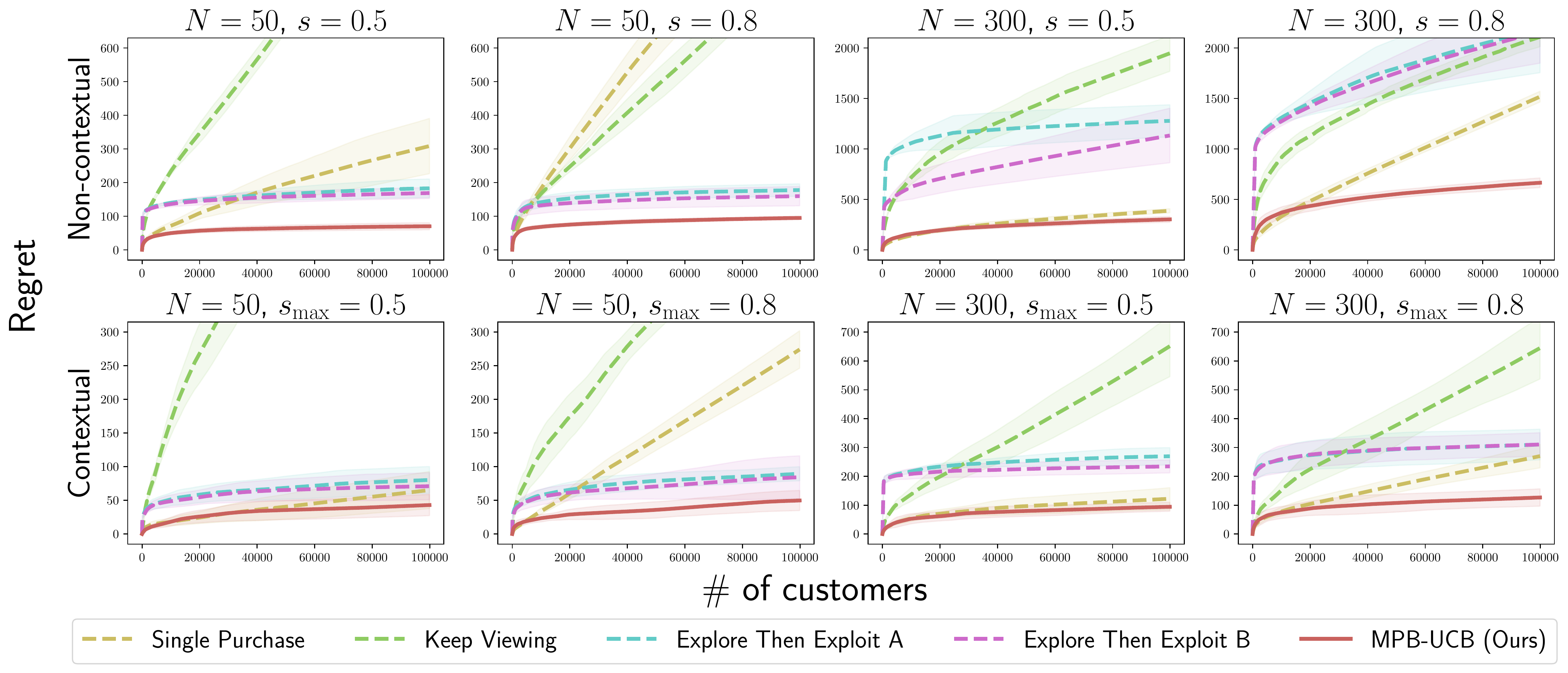}
    \vspace{-15px}
    \caption{Experimental results on synthetic data. We plot the regret curves for different settings by varying $N = 50 / 300$ and $s/s_{\max} = 0.5 / 0.8$ in both non-contextual and contextual scenarios.}
    \label{fig:exp-synthetic}
    \vspace{-15px}
\end{figure}

\paragraph{Results and analysis}
We implement the experiments for $5$ distinct simulations by resampling consumers' behaviors while their basic characteristics remain unchanged. Specifically, in both the non-contextual and contextual settings, the purchase probabilities (\textit{i.e.}, $\bflambda_t$) and parameters on the span attention and purchase budget (\textit{i.e.}, $q_t$ and $s_t$) are the same. However, the same consumer in distinct simulations may purchase different products due to randomness.

We then evaluate our method (MPB-UCB) and baselines via calculating the regret according to \equationsref{eq:regret-non} and \eqref{eq:regret-contextual}. The results are shown in \figureref{fig:exp-synthetic} and our method outperforms all baselines in both settings with various parameters.
On the one hand, the Single Purchase and Keep Viewing baselines consider different consumer choice models and are not directly applicable here to the more realistic multiple-purchase setting.
On the other hand, compared with the explore-then-exploit-based algorithms, our method integrates the advantages of traditional UCB-like algorithms, leading to a better exploration-exploitation trade-off for smaller regret. In addition, as introduced in \sectionref{sect:app-exp}, Explore Then Exploit B incorporates learning processes during the exploration phase, leading to better performance compared with Explore Then Exploit A.

\subsection{Semi-synthetic Data}
We utilize the Ad Display/Click Data from Taobao\footnote{\url{https://tianchi.aliyun.com/datalab/dataSet.html?dataId=56}. License: CC BY-NC 4.0.}, which contains ad display/purchase logs (26 million records) of 1,140,000 anonymized users from the website of Taobao for 8 days. Because the behavior logs only contain the purchase information of the users on a category of a brand, we view a (brand, category) pair in the dataset as a product and the average prices of the ads in this pair as the revenue for the product.

Because we can not obtain consumers' behaviors when we offer a new product ranking policy, following \citep{cao2019sequential}, we first estimate the parameters (\textit{i.e.}, $\bflambda$, $q$, $s$ in the non-contextual setting and $\beta^{\Lambda}$, $\beta^Q$, $\beta^S$ in the contextual setting) with all of the data. Afterward, we use the estimated parameters to simulate consumers' behaviors when we provide them with different ranking lists of products.

\paragraph{Data processing} Since each consumer may launch and leave the platform many times during the 8 days, we need to distinguish different launching activities of the same consumer. Specifically, we suppose that a consumer stops viewing the product list if it has been more than $10$ minutes since her last interaction with the platform. Afterward, we can estimate the parameters in both non-contextual and contextual settings.

In the non-contextual setting, we calculate the statistics $C_{t,k}$, $c_{t,k}$, $D_t^Q$, $d_t^Q$, $D_t^W$, and $d_t^W$ from the data and estimate $\bflambda$, $q$, $s$ directly according to \lemmaref{lem:random-variable}. The estimation is $q = 0.870$ and $s = 0.907$. Afterward, we sample $N=50$ and $300$ products with prices no more than $200$ and purchase probabilities at least $0.1$. Similar to the synthetic data, we set $T = 100,000$.

In the contextual setting, we set the gender, age level, consumption grade, and shopping level as consumers' features and the number of ads, the logarithm of the average, standard deviation, maximum, and median of the ads' prices in the (brand, category) pair as products' features. With the statistics $C_{t,k}$, $c_{t,k}$, $D_t^Q$, $d_t^Q$, $D_t^W$, and $d_t^W$, we use the ridge regression with regularization strength $1$ to estimate $\beta^{\Lambda}$, $\beta^Q$, and $\beta^S$. Afterward, we sample $N=50$ and $300$ products with prices no more than $200$ and purchase probabilities at least $0.1$. In addition, we sample $T=100,000$ consumers.

\begin{figure}[t]
    \centering
    \includegraphics[width=\linewidth]{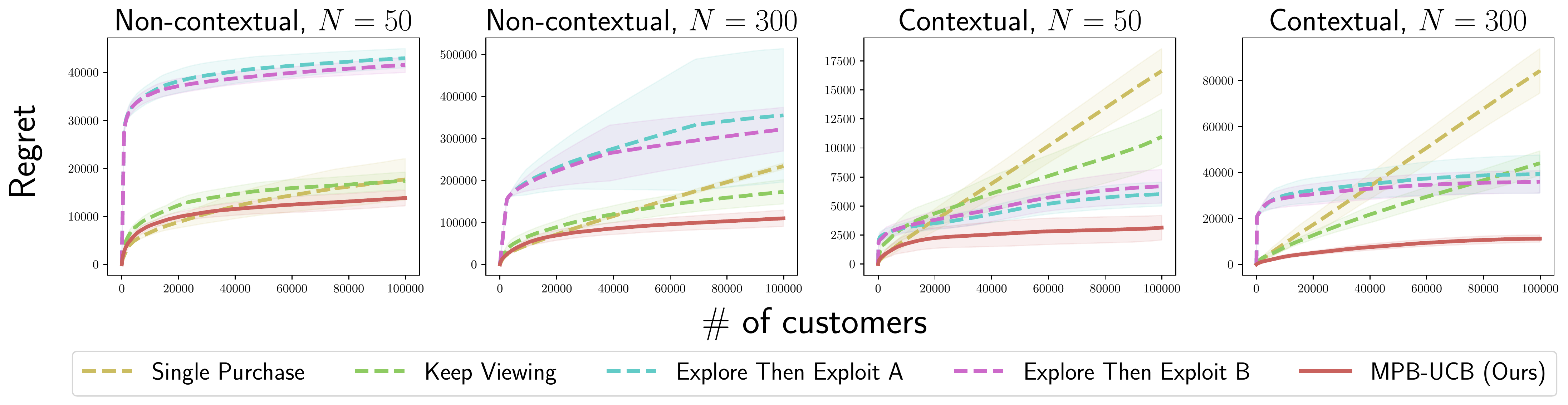}
    \vspace{-15px}
    \caption{Experimental results on the semi-synthetic data. We plot the regret curves for different settings by varying $N = 50 / 300$ in both non-contextual and contextual scenarios.}
    \label{fig:real}
    \vspace{-15px}
\end{figure}

\paragraph{Results and analysis}
Similar to the synthetic experiments, we implement the experiments for 5 distinct simulations by resampling consumers’ behaviors while their basic characteristics remain unchanged. The results of the regret are shown in \figureref{fig:real}. As shown in the figure, our method outperforms all baselines in different scenarios. Firstly, the Single Purchase and Keep Viewing baselines are not directly applicable here since they consider different consumer choice models. It is worth mentioning that the estimated $s$ is large from the dataset. As a result, the consumers prefer to keep viewing products if they purchase any product, making the Keep Viewing baseline outperforms Single Purchase in this setting. Secondly, our method combines the benefits of existing UCB-like algorithms, resulting in a superior exploration-exploitation trade-off with less regret compared with explore-then-exploit-based methods. In addition, similar to the analysis in the synthetic experiments, Explore Then Exploit B outperforms Explore Then Exploit A in most cases.

\section{Conclusion}
To conclude, we propose a novel consumer choice model to deal with multiple-purchase activities of consumers in online scenarios. We characterize the optimal ranking policy and further design the MPB-UCB algorithms with $\tilde{O}(\sqrt{T})$ regret in both non-contextual and contextual online settings. We conduct extensive experiments to prove the effectiveness of our method.

\section*{Acknowledgments}
Peng Cui's research was supported in part by National Key R\&D Program of China (No. 2018AAA0102004, No. 2020AAA0106300), National Natural Science Foundation of China (No. U1936219, 62141607), Beijing Academy of Artificial Intelligence (BAAI). Bo Li’s research was supported by the National Natural Science Foundation of China (No.72171131), the Tsinghua University Initiative Scientific Research Grant (No. 2019THZWJC11), Technology and Innovation Major Project of the Ministry of Science and Technology of China under Grants 2020AAA0108400 and 2020AAA0108403.

\bibliographystyle{plainnat}
\bibliography{references}

\section*{Checklist}

\begin{enumerate}
\item For all authors...
\begin{enumerate}
  \item Do the main claims made in the abstract and introduction accurately reflect the paper's contributions and scope?
    \answerYes{}
  \item Did you describe the limitations of your work?
    \answerYes{See \sectionref{sect:limitations}.}
  \item Did you discuss any potential negative societal impacts of your work?
    \answerYes{See \sectionref{sect:limitations}.}
  \item Have you read the ethics review guidelines and ensured that your paper conforms to them?
    \answerYes{}
\end{enumerate}

\item If you are including theoretical results...
\begin{enumerate}
  \item Did you state the full set of assumptions of all theoretical results?
    \answerYes{See \assumptionsref{assum:q-bound}, \ref{assum:linear}, and \ref{assum:bounded}.}
  \item Did you include complete proofs of all theoretical results?
    \answerYes{See \sectionref{sect:proofs}.}
\end{enumerate}

\item If you ran experiments...
\begin{enumerate}
  \item Did you include the code, data, and instructions needed to reproduce the main experimental results (either in the supplemental material or as a URL)?
    \answerYes{See \sectionref{sect:app-exp}.}
  \item Did you specify all the training details (e.g., data splits, hyper-parameters, how they were chosen)?
    \answerYes{See \sectionref{sect:app-exp}.}
        \item Did you report error bars (e.g., with respect to the random seed after running experiments multiple times)?
    \answerYes{See \figureref{fig:exp-synthetic} and \figureref{fig:real} in the main body and other figures in the appendix.}
        \item Did you include the total amount of compute and the type of resources used (e.g., type of GPUs, internal cluster, or cloud provider)?
    \answerYes{See \sectionref{sect:app-exp}.}
\end{enumerate}

\item If you are using existing assets (e.g., code, data, models) or curating/releasing new assets...
\begin{enumerate}
  \item If your work uses existing assets, did you cite the creators?
    \answerYes{The dataset url is included in the paper.}
  \item Did you mention the license of the assets?
    \answerYes{The license is included in the paper.}
  \item Did you include any new assets either in the supplemental material or as a URL? \answerNo{}
  \item Did you discuss whether and how consent was obtained from people whose data you're using/curating?
    \answerYes{It is a public dataset with the CC BY-NC 4.0 license.}
  \item Did you discuss whether the data you are using/curating contains personally identifiable information or offensive content? \answerYes{The data has been anonymized.}
\end{enumerate}

\item If you used crowdsourcing or conducted research with human subjects...
\begin{enumerate}
  \item Did you include the full text of instructions given to participants and screenshots, if applicable?
    \answerNA{}
  \item Did you describe any potential participant risks, with links to Institutional Review Board (IRB) approvals, if applicable?
    \answerNA{}
  \item Did you include the estimated hourly wage paid to participants and the total amount spent on participant compensation?
    \answerNA{}
\end{enumerate}

\end{enumerate}

\clearpage
\appendix

\section{Limitations and Societal Impacts} \label{sect:limitations}
One potential limitation is that we suppose the distribution of the attention span and purchase budget obey the geometric distribution. While the assumption is common \citep{cao2019dynamic,liang2021sequential}, we can relax it in future work, similar to \citep{chen2021revenue,cao2019sequential}.

This paper does not have critical negative societal impacts because the purpose of the paper is to design an optimal ranking policy for online retailers. Besides the online retailers, consumers may also benefit because \equationref{eq:offline} indicates that a product with a higher purchase probability may appear in a higher order in the ranking list. As a result, consumers may find the products shown in the ranking list highly relevant.
\section{More Experimental Results} \label{sect:app-exp}
\subsection{Implementation Details}
\paragraph{Baselines}
\begin{itm}
    \item \textbf{Single Purchase} \citep{cao2019dynamic}. \citet{cao2019dynamic} suppose that consumers always purchase at most one product. As a result, we implement them by setting $\tilde{s}_t = 0$ for all consumers. The estimations of $\tilde{\lambda}_{t, k}$ and $\tilde{q}_{t}$ are the same as our method.
    \item \textbf{Keep Viewing} \citep{liang2021sequential}. \citet{liang2021sequential} suppose that consumers always keep viewing the list if they purchase one product. To get the optimal ranking policy with the knowledge of consumers’ characteristics, we can sort the products with value $(\lambda_kr_k) / (1 - q - (1- q)\lambda_k)$ \citep[Theorem 1]{liang2021sequential}. Afterward, we use similar techniques to estimate $\tilde{\lambda}_{t, k}$ and $\tilde{q}_t$ and maximize the revenue in the online settings.
    \item \textbf{Explore Then Exploit A}. Following \citep{cao2019dynamic}, we implement two explore-then-exploit-based algorithms as baselines. For the Explore Then Exploit A method, we first adopt an exploration stage to ensure that each product is displayed to consumers at least $\delta \log T$ times. (Here $\delta$ is a hyper-parameter.) To be specific, in the exploration stage, we rank the products according to their current numbers of displays. Afterward, we use the estimated parameters to provide the optimal ranking policy according to \theoremref{thrm:offline} in the main body of the paper.
    \item \textbf{Explore Then Exploit B}. This method is a variant of the Explore Then Exploit A method. In detail, in the exploration stage, after ranking the products according to their number of displays, we further rank the products that have been displayed to consumers at least $\delta \log T$ times according to \theoremref{thrm:offline} in the main body.
\end{itm}

\paragraph{Hyper-parameters} We add hyper-parameters $\xi_{\Lambda}$, $\xi_Q$, and $\xi_W$ to determine the level of exploration for our method and two baselines (Single purchase and Keep viewing). The other two baselines do not include the hyper-parameter $\xi$ because they do not have the dynamic exploration step. In detail, in the non-contextual setting, the parameters $\tilde{\lambda}_{t,k}$, $\tilde{q}_t$, and $\tilde{w}_t$ of our method are calculated as
\begin{small}
\begin{equation}
    \tilde{\lambda}_{t,k} \triangleq \min\left\{1, \hat{\lambda}_{t,k} + \xi_{\Lambda}\sqrt{\frac{\log t}{C_{t,k}}}\right\}, \tilde{q}_t \triangleq \min\left\{1 - \epsilon_Q, \hat{q}_t + \xi_Q\sqrt{\frac{\log t}{D_t^Q}}\right\}, \tilde{w}_t \triangleq \min\left\{\tilde{q}_t, \hat{w}_t + \xi_W\sqrt{\frac{\log t}{D_t^W}}\right\}.
\end{equation}
\end{small}
In the contextual setting, the parameters are also calculated according to the hyper-parameters $\xi_{\Lambda}$, $\xi_Q$, and $\xi_W$,
\begin{equation}
    \begin{aligned}
        \tilde{\lambda}_{t, k} & = \proj_{[0, 1]}\left(y_{t,k}^{\top}\hat{\beta}_{t-1}^{\Lambda} + \xi_{\Lambda} \cdot \tau(t-1, m_y, \alpha_{\Lambda})\left\|y_{t,k}\right\|_{\left(\Sigma_{t-1}^{\Lambda}\right)^{-1}}\right), \\
        \tilde{q}_t & = \proj_{[0, 1 - \epsilon_Q]}\left(x_t^{\top}\hat{\beta}_{t-1}^Q + \xi_Q \cdot \tau(t-1, m_x, \alpha_Q)\left\|x_t\right\|_{\left(\Sigma_{t-1}^Q\right)^{-1}}\right), \\
        \tilde{w}_t & = \proj_{[0, \tilde{q}_t]}\left(z_t^{\top}\hat{\beta}_{t-1}^W + \xi_W \cdot \tau(t-1, m_x^2, \alpha_W)\left\|z_t\right\|_{\left(\Sigma_{t-1}^W\right)^{-1}}\right). \\
    \end{aligned}
\end{equation}

For the synthetic dataset, we search the hyper-parameters $\xi_{\Lambda} \in \{0.1, 0.3, 0.5\}$, $\xi_Q, \xi_W \in \{0.05, 0.1, 0.2\}$, $\delta \in \{0.5, 1, 2, 5, 10\}$, and $\alpha_Q = \alpha_W = \alpha_{\Lambda} \in \{0.01, 0.1, 1\}$. For the semi-synthetic dataset, we search the hyper-parameters $\xi_{\Lambda} \in \{0.1, 0.3, 0.5\}$, $\xi_Q, \xi_W \in \{0.05, 0.1, 0.2\}$, $\delta \in \{0.5, 1, 2, 5, 10\}$, and $\alpha_Q = \alpha_W = \alpha_{\Lambda} \in \{0.1, 1, 10\}$. The grid search code is based on the NNI package\footnote{\url{https://github.com/microsoft/nni}. Licence: MIT License.}.

\subsection{More Results on the Synthetic and Semi-synthetic Data}
We further report two more metrics to verify the effectiveness of our method. In detail, $\text{Average revenue}(T)$ measures the average revenue achieved by different algorithms till the $T$-th consumer. $\text{Revenue ratio}(T)$ measures the ratio of the achieved revenue over the optimal ranking policy. Formally, the metrics are calculated as follows.
\begin{equation} \label{eq:more-metrics}
    \begin{aligned}
        \text{Average revenue}(T) & = \frac{1}{T}\sum_{t=1}^T \revenue(\bfsigma_t; \bflambda_t, q_t, s_t), \\
        \text{Revenue ratio}(T) & = \frac{\sum_{t=1}^T \revenue(\bfsigma_t; \bflambda_t, q_t, s_t)}{\sum_{t=1}^T \revenue(\bfsigma_t^*; \bflambda_t, q_t, s_t)}.
    \end{aligned}
\end{equation}
For both metrics, higher values indicate better performances. The results of these two metrics on both the synthetic and Semi-synthetic datasets are shown in \figuresref{fig:syn-revenue}, \ref{fig:syn-ratio}, \ref{fig:real-revenue}, and \ref{fig:real-ratio}. These figures further demonstrate that our method outperforms all baselines in both contextual and non-contextual settings with various parameters (\textit{i.e.}, $s$, $s_{\max}$, and $N$).

\begin{figure}[t]
    \centering
    \includegraphics[width=\linewidth]{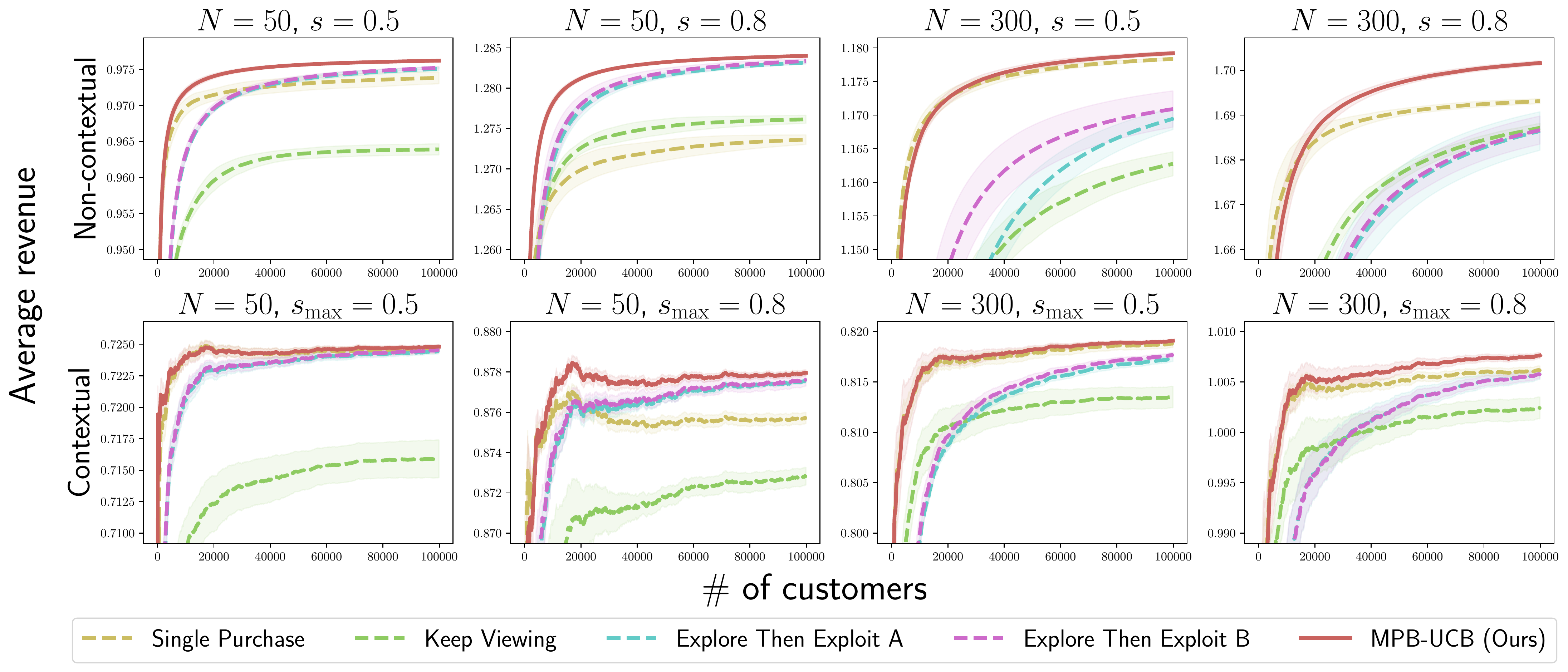}
    \vspace{-10px}
    \caption{The average revenue metric on the synthetic data.}
    \label{fig:syn-revenue}
    \vspace{-10px}
\end{figure}

\begin{figure}[t]
    \centering
    \includegraphics[width=\linewidth]{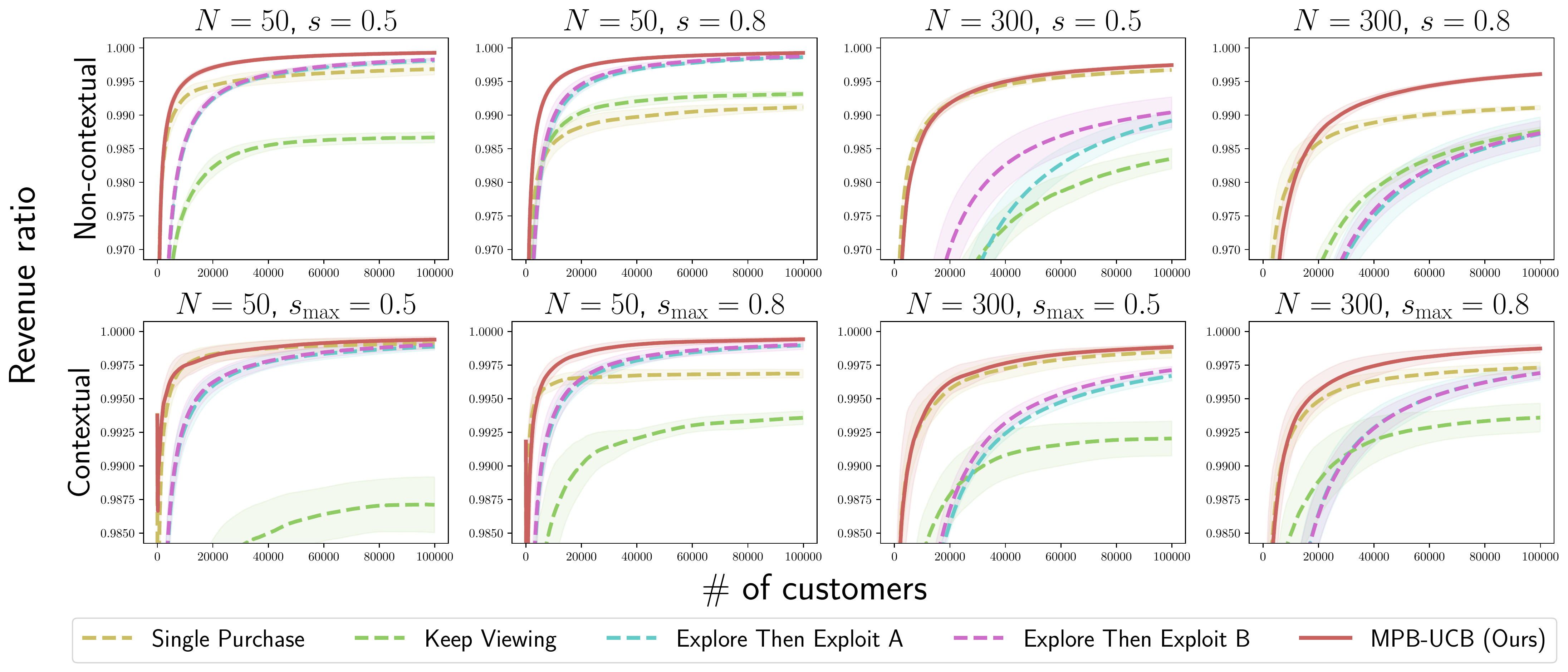}
    \vspace{-10px}
    \caption{The revenue ratio metric on the synthetic data.}
    \label{fig:syn-ratio}
    \vspace{-10px}
\end{figure}

\begin{figure}[t]
    \centering
    \includegraphics[width=\linewidth]{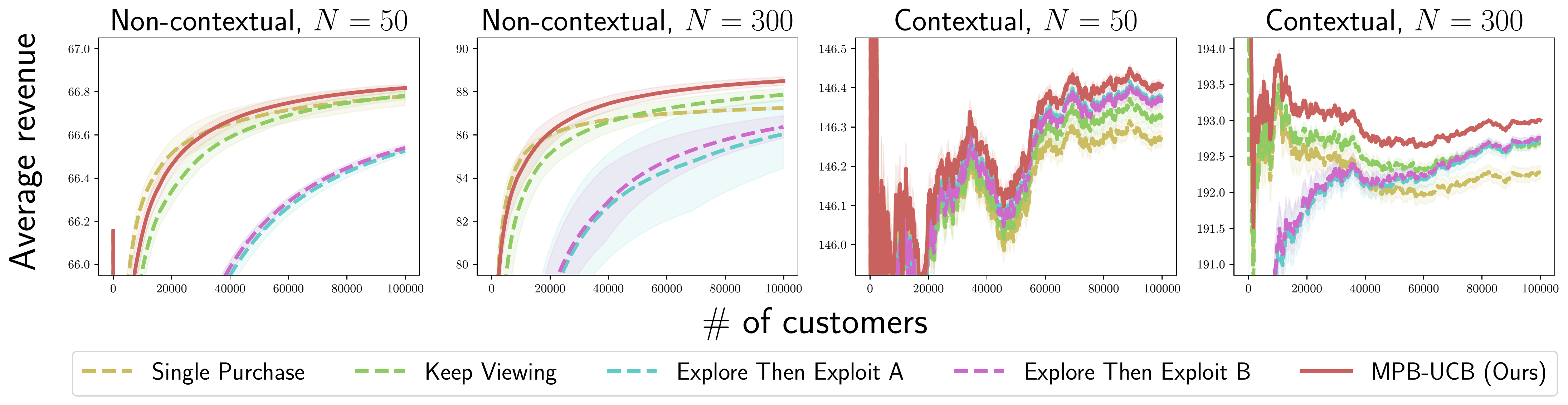}
    \vspace{-10px}
    \caption{The average revenue metric on the semi-synthetic data.}
    \label{fig:real-revenue}
    \vspace{-10px}
\end{figure}

\begin{figure}[t]
    \centering
    \includegraphics[width=\linewidth]{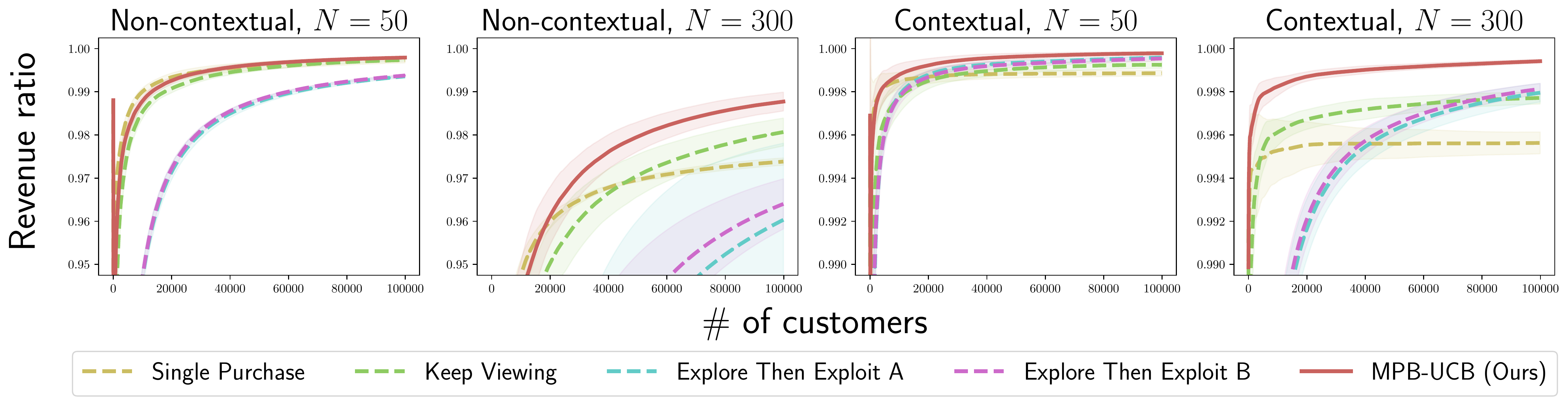}
    \vspace{-10px}
    \caption{The revenue ratio metric on the semi-synthetic data.}
    \label{fig:real-ratio}
    \vspace{-10px}
\end{figure}
\section{Omitted Proofs} \label{sect:proofs}
\subsection{Proof of \texorpdfstring{\theoremref{thrm:offline}}{Theorem}}
\begin{proof}
    We drop all the subscript $t$ for simplicity. We first write the expected revenue function.
    \begin{equation}
        \begin{aligned}
            & \, \revenue\left(\bfsigma; \bflambda, q, s\right) \\
            = & \, \bbE_{V \sim \geo(1 - q), B \sim \geo(1 - s)}\left[R(V, B; \bflambda, \bfsigma)\right] \\
            = & \, \sum_{v = 1}^N\sum_{b=1}^NP(V=v)P(B=b)\left(\sum_{Q \subseteq [v], |Q| \le b}\ppurchase(Q; v, b, \bflambda, \bfsigma)\left(\sum_{k \in Q}r_{\sigma_k}\right)\right) \\
            = & \, \sum_{k=1}^N r_{\sigma_k} \left(\sum_{v = 1}^N\sum_{b=1}^NP(V=v)P(B=b)\left(\sum_{Q \subseteq [N], k \in Q}\ppurchase(Q; v, b, \bflambda, \bfsigma)\right)\right) \\
            = & \, \sum_{k=1}^N r_{\sigma_k} \left(\sum_{v = k}^N\sum_{b=1}^NP(V=v)P(B=b)\left(\sum_{Q \subseteq [N], k \in Q}\ppurchase(Q; v, b, \bflambda, \bfsigma)\right)\right).
        \end{aligned}
    \end{equation}
    For any $Q_k \subseteq [N]$ that satisfies $k \in Q_k$, define $Q_k^- \triangleq \{i \in Q_k : i < k\}$ and $Q_k^+ \triangleq \{i \in Q_k: i > k\}$. We further define
    \begin{equation}
        G_k^-(Q_k^-) \triangleq \left(\prod_{i \in Q_k^-}\lambda_{\sigma_i}\right)\left(\prod_{1 \le i < k, i \not\in Q_k^-}\left(1 - \lambda_{\sigma_i}\right)\right).
    \end{equation}
    As a result,
    \begin{small}
        \begin{equation}
            \begin{aligned}
                & \sum_{v = k}^N\sum_{b=1}^NP(V=v)P(B=b)\left(\sum_{Q \subseteq [N], k \in Q}\ppurchase(Q; v, b, \bflambda, \bfsigma)\right) \\
                = & \sum_{v = k}^N\sum_{b=1}^NP(V=v)P(B=b)\left(\sum_{Q_k^- \subseteq [k - 1]} \sum_{Q_k^+ \subseteq [N] \backslash [N]}\ppurchase(Q; v, b, \bflambda, \bfsigma)\right) \\
                = & \, \lambda_{\sigma_k} \sum_{v = k}^N\sum_{b=1}^NP(V=v)P(B=b)\left(\sum_{Q_k^- \subseteq [k - 1]} \sum_{Q_k^+ \subseteq [N] \backslash [N]}G_k^-(Q_k^-)\ppurchase_{k+1}\left(Q_k^+; v, b - |Q_k^-| - 1, \bflambda, \bfsigma\right)\right).
            \end{aligned}
        \end{equation}
    \end{small}
    Here $\ppurchase_{k+1}\left(Q_k^+; v, b - |Q_k^-| - 1, \bflambda, \bfsigma\right)$ in the second equality denotes the probability of purchasing $Q_k^+$ with attention span $v$ and purchase budget $b$ after purchasing $|Q_k^-| + 1$ products, \textit{i.e.},
    \begin{footnotesize}
        \begin{equation}
            \ppurchase_{k+1}\left(Q_k^+; v, b - |Q_k^-| - 1, \bflambda, \bfsigma\right) = \begin{cases}
            \left(\prod_{i \in Q_k^+}\lambda_{\sigma_i}\right)\left(\prod_{k < i \le v}\left(1 - \lambda_{\sigma_i}\right)\right), & \text{if } |Q_k^+| < b - |Q_k^-| - 1. \\
            \left(\prod_{i \in Q_k^+}\lambda_{\sigma_i}\right)\left(\prod_{k < i \le \max\left(Q_k^+\right)}\left(1 - \lambda_{\sigma_i}\right)\right), & \text{if } |Q_k^+| = b - |Q_k^-| - 1. \\
            0, & \text{if } |Q_k^+| > b - |Q_k^-| - 1.
        \end{cases}
        \end{equation}
    \end{footnotesize}
    As a result, when $|Q_k^-|$, $v$, $b$, $\bflambda$, and $\bfsigma$ are fixed, by definition, we have
    \begin{equation}
        \sum_{Q_k^+ \subseteq [N] \backslash [N]} \ppurchase_{k+1}\left(Q_k^+; v, b - |Q_k^-| - 1, \bflambda, \bfsigma\right) = 1
    \end{equation}
    Hence,
    \begin{small}
        \begin{equation}
            \begin{aligned}
                & \, \lambda_{\sigma_k} \sum_{v = k}^N\sum_{b=1}^NP(V=v)P(B=b)\left(\sum_{Q_k^- \subseteq [k - 1]} \sum_{Q_k^+ \subseteq [N] \backslash [N]}G_k^-(Q_k^-)\ppurchase_{k+1}\left(Q_k^+; v, b - |Q_k^-| - 1, \bflambda, \bfsigma\right)\right) \\
                = & \, \lambda_{\sigma_k} \sum_{v = k}^N\sum_{b=1}^NP(V=v)P(B=b)\left(\sum_{u=0}^{b-1}\left(\sum_{Q_k^- \subseteq [k - 1], |Q_k^-| = u} G_k^-(Q_k^-)\right) \left(\sum_{Q_k^+ \subseteq [N] \backslash [N]}\ppurchase_{k+1}\left(Q_k^+; v, b - u - 1, \bflambda, \bfsigma\right)\right)\right) \\
                = & \, \lambda_{\sigma_k} \sum_{v = k}^N\sum_{b=1}^NP(V=v)P(B=b)\left(\sum_{u=0}^{b-1}\left(\sum_{Q_k^- \subseteq [k - 1], |Q_k^-| = u} G_k^-(Q_k^-)\right)\right) \\
                = & \, \lambda_{\sigma_k} P(V \ge k) \sum_{b=1}^NP(B=b)\left(\sum_{u=0}^{b-1}\left(\sum_{Q_k^- \subseteq [k - 1], |Q_k^-| = u} G_k^-(Q_k^-)\right)\right) \\
                = & \, \lambda_{\sigma_k}P(V \ge k) \sum_{u=0}^{k-1}P(B \ge u + 1)\left(\sum_{Q_k^- \subseteq [k - 1], \left|Q_k^-\right| = u}G_k^-(Q_k^-)\right).
            \end{aligned}
        \end{equation}
    \end{small}
    For any $k \ge 1$ and $0 \le u \le k - 1$, define
    \begin{equation} \label{eq:defn-H}
        H_{\bfsigma}(k, u; \bflambda) \triangleq \sum_{Q_k^- \subseteq [k - 1], \left|Q_k^-\right| = u}G_k^-(Q_k^-).
    \end{equation}
    Intuitively, $H_{\bfsigma}(k, u; \bflambda)$ measures the probability that a consumer purchases exactly $u$ products from the position index $1$ to $k-1$ if ignoring the attention span and purchase budget. As a result,
    \begin{equation} \label{eq:revenue-another}
        \begin{aligned}
            \revenue\left(\bfsigma; \bflambda, q, s\right) = & \sum_{k=1}^N r_{\sigma_k}\lambda_{\sigma_k}P(V \ge k) \left(\sum_{u=0}^{k-1}P(B \ge u + 1)H_{\bfsigma}(k, u; \bflambda)\right) \\
            = & \sum_{k=1}^N r_{\sigma_k}\lambda_{\sigma_k} q^{k-1}\left(\sum_{u=0}^{k-1}s^uH_{\bfsigma}(k, u; \bflambda)\right).
        \end{aligned}
    \end{equation}

    In addition, it is easy to verify that $H_{\bfsigma}(k, u; \bflambda)$ has the following recurrence relation
    \begin{small}
        \begin{equation}
            \begin{aligned}
                & H_{\bfsigma}(1,0; \bflambda) = 1. \\
                & H_{\bfsigma}(k, 0; \bflambda) = \prod_{i=1}^{k-1}(1-\lambda_{\sigma_i}), \quad H_{\bfsigma}(k, k-1; \bflambda) = \prod_{i=1}^{k-1}\lambda_{\sigma_i}, && \forall k \ge 2. \\
                & H_{\bfsigma}(k, u; \bflambda) = \left(1 - \lambda_{\sigma_{k-1}}\right)H_{\bfsigma}(k - 1, u; \bflambda) + \lambda_{\sigma_{k-1}}H_{\bfsigma}(k-1, u - 1; \bflambda), && \forall k \ge 2, 0 < u < k - 1.
            \end{aligned}
        \end{equation}
    \end{small}

    From the definition of $G^-_k$ and $H_{\bfsigma}$, it is easy to check that the value of $H_{\bfsigma}(k, u; \bflambda)$ does not depend on the order of $\sigma_1, \sigma_2, \dots, \sigma_{k-1}$. As a result, for any $1 \le i < N$, let $\bfsigma$ be a ranking policy that differs from $\bfsigma^*$ only in the $i$-th and $(i+1)$-th position, \textit{i.e.},
    \begin{equation}
        \sigma_j = \begin{cases}
            \sigma^*_{i+1}, & \text{if } j = i. \\
            \sigma^*_i, & \text{if } j = i + 1. \\
            \sigma^*_j, & \text{otherwise.}
        \end{cases}
    \end{equation}
    Due to the optimality of $\bfsigma^*$, we have
    \begin{equation}
        \begin{aligned}
            0 & \le \revenue\left(\bfsigma^*; \bflambda, q, s\right) - \revenue\left(\bfsigma; \bflambda, q, s\right) \\
            & = r_{\sigma^*_i}\lambda_{\sigma^*_i} q^{i-1}\left(\sum_{u=0}^{i-1}s^uH_{\bfsigma^*}(i, u; \bflambda)\right) + r_{\sigma^*_{i+ 1}}\lambda_{\sigma^*_{i+1}} q^{i}\left(\sum_{u=0}^{i}s^uH_{\bfsigma^*}(i + 1, u; \bflambda)\right) \\
            & - r_{\sigma_i}\lambda_{\sigma_i} q^{i-1}\left(\sum_{u=0}^{i-1}s^uH_{\bfsigma}(i, u; \bflambda)\right) + r_{\sigma_{i+ 1}}\lambda_{\sigma_{i+1}} q^{i}\left(\sum_{u=0}^{i}s^uH_{\bfsigma}(i + 1, u; \bflambda)\right) \\
            & = r_{\sigma^*_i}\lambda_{\sigma^*_i} q^{i-1}\left(\sum_{u=0}^{i-1}s^uH_{\bfsigma^*}(i, u; \bflambda)\right) - r_{\sigma_i}\lambda_{\sigma_i} q^{i-1}\left(\sum_{u=0}^{i-1}s^uH_{\bfsigma}(i, u; \bflambda)\right) \\
            & + r_{\sigma^*_{i+ 1}}\lambda_{\sigma^*_{i+1}} q^{i}\left(\left(1 - \lambda_{\sigma^*_{i}}\right)\sum_{u=0}^{i-1}s^uH_{\bfsigma^*}(i, u; \bflambda) + \lambda_{\sigma^*_{i}}\sum_{u=1}^is^uH_{\bfsigma^*}(i, u - 1; \bflambda)\right) \\
            & - r_{\sigma_{i+ 1}}\lambda_{\sigma_{i+1}} q^{i}\left(\left(1 - \lambda_{\sigma_{i}}\right)\sum_{u=0}^{i-1}s^uH_{\bfsigma}(i, u; \bflambda) + \lambda_{\sigma_{i}}\sum_{u=1}^is^uH_{\bfsigma}(i, u - 1; \bflambda)\right) \\
            & = q^{i-1}\left(\sum_{u=0}^{i-1}s^uH_{\bfsigma}(i, u; \bflambda)\right) \cdot \\
            & \quad \left(r_{\sigma^*_i}\lambda_{\sigma^*_i}\left(1 - q + q(1 - s)\lambda_{\sigma^*_{i+1}}\right) - r_{\sigma^*_{i+1}}\lambda_{\sigma^*_{i+1}}\left(1 - q + q(1 - s)\lambda_{\sigma^*_i}\right)\right).
        \end{aligned}
    \end{equation}
    Hence,
    \begin{equation}
        \frac{r_{\sigma^*_i}\lambda_{\sigma^*_i}}{1 - q + q(1 - s)\lambda_{\sigma^*_i}} \ge \frac{r_{\sigma^*_{i+1}}\lambda_{\sigma^*_{i+1}}}{1 - q + q(1 - s)\lambda_{\sigma^*_{i+1}}}.
    \end{equation}
    Now the claim follows.
\end{proof}

\subsection{Proof of \texorpdfstring{\lemmaref{lem:random-variable}}{Lemma}}
\begin{proof}
    Random variables $\gamma_{t,k}$ are independent of each other by the assumptions that the behaviors of different consumers are independent and consumer interests in any two products are independent. For $\eta_{t,k}$, suppose the consumer purchases $u$ products after viewing the $k$-th product, then
    \begin{equation}
        \begin{aligned}
            P\left(\eta_{t,k} = 1\right) = & P\left(V_t > k, B_t > u \mid V_t \ge k, B_t > u\right) \\
            = & \frac{P\left(V_t > k, B_t > u\right)}{P\left(V_t \ge k, B_t > u\right)} = \frac{P\left(V_t > k\right)P\left(B_t > u\right)}{P\left(V_t \ge k\right)P\left(B_t > u\right)}
            = & \frac{q_t^{k}}{q_t^{k-1}} = q_t.
        \end{aligned}
    \end{equation}
    As a result, $P\left(\eta_{t,k} = 1\right)$ does not depend on the historical purchase activities and is always $q_t$. By the assumption that the behaviors of different consumers are independent, $\eta_{t,k}$ are independent Bernoulli random variables and $\eta_{t,k} \sim \ber(q_t)$.

    Similarly, for $\mu_{t,k}$, suppose the consumer purchases $u$ products after viewing the $k$-th product, then
    \begin{equation}
        \begin{aligned}
            P\left(\mu_{t,k} = 1\right) = & P\left(V_t > k, B_t > u \mid V_t \ge k, B_t \ge u\right) \\
            = & \frac{P\left(V_t > k, B_t > u\right)}{P\left(V_t \ge k, B_t \ge u\right)} = \frac{P\left(V_t > k\right)P\left(B_t > u\right)}{P\left(V_t \ge k\right)P\left(B_t \ge u\right)}
            = & \frac{q_t^{k}s_t^{u}}{q_t^{k-1}s_t^{u-1}} = q_ts_t.
        \end{aligned}
    \end{equation}
    As a result, $P\left(\eta_{t,k} = 1\right)$ does not depend on the historical purchase activities. Hence, by the assumption that the behaviors of different consumers are independent, $\mu_{t,k}$ are independent Bernoulli random variables and $\mu_{t,k} \sim \ber(q_ts_t)$.
\end{proof}

\subsection{Proof of \texorpdfstring{\propositionref{prop:estimation-bound}}{Proposition}}
\begin{proof}
    For each timestamp $t$, define the favorable event as
    \begin{small}
        \begin{equation}
            \xi_t \triangleq \left\{\left|q - \hat{q}_t \right| \le \sqrt{\frac{2\log t}{D_t^Q}} , \left|qs - \hat{w}_t \right| \le \sqrt{\frac{2\log t}{D_t^W}}, \left|\lambda_k - \hat{\lambda}_{t,k} \right| \le \sqrt{\frac{2 \log t}{C_{t,k}}}, \forall k \in [N]\right\}.
        \end{equation}
    \end{small}
    Let $\bar{\xi}_t$ denote the complement event of $\xi_t$ and $P(\xi_t) = 1 - P(\bar{\xi}_t)$. To estimate $P\left(\bar{\xi}_{t}\right)$, we need some further notations. Let $\xi^Q_t$ denote the event $\left\{\left|q - \hat{q}_t \right| > \sqrt{2\log t / D_t^Q} \right\}$, $\xi^W_t$ denote the event $\left\{\left|qs - \hat{w}_t \right| > \sqrt{2\log t / D_t^W} \right\}$, and $\xi^{\Lambda}_t$ denote the event $\left\{\exists k \in [N], \left|\lambda_k - \hat{\lambda}_{t,k} \right| > \sqrt{2 \log t / C_{t,k}}\right\}$. As a result, $\bar{\xi}_t = \xi_t^Q \cup \xi_t^W \cup \xi_t^\Lambda$ and 
    \begin{equation} \label{eq:union-bound}
        P(\bar{\xi}_t) \le P(\xi_t^Q) + P(\xi_t^W) + P(\xi_t^\Lambda)
    \end{equation} by the union bound.
    We first estimate $P\left(\xi^Q_t\right)$. Let $q'_s$ be the $s$-th observation of $q$ and $\hat{q}'_s = \sum_{u=1}^sq'_u / s$ be the average of the top $s$ observations. According to \lemmaref{lem:random-variable} and Hoeffding's inequality, we can get that for any $s \in [Nt]$,
    \begin{equation}
        P\left(\left|q - \hat{q}'_s\right| > \sqrt{\frac{2 \log t}{s}}\right) \le 2t^{-4}.
    \end{equation}
    As a result, by the union bound,
    \begin{equation} \label{eq:union-bound-1}
        \begin{aligned}    P\left(\xi^Q_{t}\right) & = P\left(\left|q - \hat{q}'_{D_t^Q}\right| > \sqrt{\frac{2 \log t}{D_t^Q}}\right) \le \sum_{s=1}^{tN} P\left(\left|q - \hat{q}'_{D_t^Q}\right| > \sqrt{\frac{2 \log t}{D_t^Q}}, D_t^Q=s\right) \\
            & \le \sum_{s=1}^{tN} P\left(\left|q - \hat{q}'_{s}\right| > \sqrt{\frac{2 \log t}{s}}\right) \le \sum_{s=1}^{tN} 2t^{-4} = 2Nt^{-3}. \\
        \end{aligned}
    \end{equation}
    Using similar techniques, we can get that
    \begin{equation} \label{eq:union-bound-2}
        P\left(\xi^W_{t}\right) \le 2Nt^{-3} \quad \text{and} \quad P\left(\xi^{\Lambda}_{t}\right) \le 2Nt^{-3}.
    \end{equation}
    Now the claim follows by combining \equationsref{eq:union-bound}, \eqref{eq:union-bound-1}, \eqref{eq:union-bound-2}.
\end{proof}

\subsection{Proof of \texorpdfstring{\theoremref{thrm:regret-online}}{Theorem}}
We first define
\begin{equation}
    O_{t, k}^{\Lambda} \triangleq \bbI\left[k \le \Phi_t\right], \quad O_{t,k}^Q \triangleq \bbI\left[k \le \Phi_t, \gamma_{t,k} = 0\right], \quad O_{t,k}^W \triangleq \bbI\left[k \le \Phi_t, \gamma_{t,k} = 1\right].
\end{equation}
We also need the following two propositions. The subscript $t$ are omitted in the two propositions for simplicity.

\begin{proposition} \label{prop:revenue-ranking}
    Suppose $\tilde{\bflambda} \succeq \bflambda$, $\tilde{q} \ge q$, and $\tilde{s}\tilde{q} \ge sq$, then
    \begin{equation}
        \revenue\left(\bfsigma^*; \bflambda, q, s\right) \le \revenue\left(\bfsigma^*; \tilde{\bflambda}, \tilde{q}, \tilde{s}\right).
    \end{equation}
\end{proposition}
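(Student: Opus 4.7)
The plan is to prove that $\revenue(\bfsigma^*(\cdot);\cdot,\cdot,\cdot)$, the revenue under the (locally) optimal ranking, is monotone non-decreasing in each of its three arguments along the partial order implied by the hypotheses. Introducing $w:=qs$, the conditions become $\tilde\bflambda\succeq\bflambda$, $\tilde q\ge q$, $\tilde w\ge w$. First, summing the formula from the proof of \theoremref{thrm:offline} over $u$ via the identity $\sum_u q^{k-1-u}w^uH_{\bfsigma}(k,u;\bflambda)=\prod_{j<k}[(1-\lambda_{\sigma_j})q+\lambda_{\sigma_j}w]$ yields the compact form
\[
\revenue(\bfsigma;\bflambda,q,s)=\sum_{k=1}^N r_{\sigma_k}\lambda_{\sigma_k}\prod_{j<k}p_j,\qquad p_j:=(1-\lambda_{\sigma_j})q+\lambda_{\sigma_j}w.
\]
Since $\partial p_j/\partial q=1-\lambda_{\sigma_j}\ge 0$ and $\partial p_j/\partial w=\lambda_{\sigma_j}\ge 0$ while every other factor is non-negative, $\revenue(\bfsigma;\cdot,\cdot,\cdot)$ is monotone non-decreasing in $q$ and in $w$ for \emph{every} fixed ranking, and the maximum over $\bfsigma$ inherits this monotonicity. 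Thus it suffices to prove $\max_{\bfsigma}\revenue(\bfsigma;\bflambda,\tilde q,\tilde s)\le\max_{\bfsigma}\revenue(\bfsigma;\tilde\bflambda,\tilde q,\tilde s)$, i.e.\ monotonicity in $\bflambda$ at fixed $(\tilde q,\tilde s)$.

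The delicate component is this monotonicity in $\bflambda$, which fails for arbitrary $\bfsigma$ because $p_j$ decreases in $\lambda_{\sigma_j}$ when $w\le q$, so I will exploit optimality through an envelope argument. Fix $\bfsigma^*_0$ to be an optimal ranking at the current point $(\bflambda,\tilde q,\tilde s)$, and suppose product $k$ occupies position $k_0$ in $\bfsigma^*_0$. Since $\lambda_k$ appears in $\revenue(\bfsigma^*_0;\cdot,\tilde q,\tilde s)$ linearly (once in the $k_0$-th term and once inside $p_{\sigma^*_{0,k_0}}$), the revenue is affine in $\lambda_k$ with constant slope
\[
\prod_{j<k_0}p_j\cdot\Bigl[r_k-(\tilde q-\tilde w)\sum_{i>k_0}r_{\sigma^*_{0,i}}\lambda_{\sigma^*_{0,i}}\prod_{k_0<j<i}p_j\Bigr],
\]
with all $p_j$ evaluated at $(\bflambda,\tilde q,\tilde s)$. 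The ordering from \theoremref{thrm:offline} applied at $(\bflambda,\tilde q,\tilde s)$ gives $r_{\sigma^*_{0,i}}\lambda_{\sigma^*_{0,i}}\le r_k\lambda_k(1-p_{\sigma^*_{0,i}})/(1-p_k)$ for $i>k_0$; substituting and then telescoping
\[
\sum_{i>k_0}(1-p_{\sigma^*_{0,i}})\prod_{k_0<j<i}p_j=1-\prod_{k_0<j\le N}p_{\sigma^*_{0,j}}\le 1
\]
bounds the inner sum by $r_k\lambda_k/(1-p_k)$. Hence $(\tilde q-\tilde w)\cdot(\text{inner sum})\le r_k\lambda_k(\tilde q-\tilde w)/(1-p_k)\le r_k$, the final step using $1-p_k=1-\tilde q+\lambda_k(\tilde q-\tilde w)\ge\lambda_k(\tilde q-\tilde w)$; the bracket, and therefore the slope, is non-negative.

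Because the slope in $\lambda_k$ is non-negative and independent of $\lambda_k$, $\revenue(\bfsigma^*_0;\bflambda+\epsilon e_k,\tilde q,\tilde s)\ge\revenue(\bfsigma^*_0;\bflambda,\tilde q,\tilde s)$ for every $\epsilon\ge 0$, and consequently $\max_{\bfsigma}\revenue(\bfsigma;\bflambda+\epsilon e_k,\tilde q,\tilde s)\ge\max_{\bfsigma}\revenue(\bfsigma;\bflambda,\tilde q,\tilde s)$. Iterating coordinate-by-coordinate along the path from $\bflambda$ to $\tilde\bflambda$ (using the locally optimal ranking at each intermediate point) yields the required $\bflambda$ monotonicity, and combining with the $(q,s)$ monotonicity from the first paragraph closes the proof. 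The main obstacle is the non-negative-slope claim: the naive pointwise formula is not monotone in $\bflambda$ for arbitrary $\bfsigma$, so both the sort characterization of \theoremref{thrm:offline} and the telescoping identity are essential.
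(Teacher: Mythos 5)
Your argument is mathematically sound in each step, but it establishes a formally different statement from the one written, and it takes a genuinely different route from the paper. On the first point: in the paper $\bfsigma^*$ denotes the fixed permutation that is optimal for the \emph{un-tilded} parameters $(\bflambda, q, s)$, and the proposition asserts that this \emph{same} permutation earns at least as much under $(\tilde{\bflambda}, \tilde{q}, \tilde{s})$. Your product-form identity $\revenue(\bfsigma;\bflambda,q,s)=\sum_k r_{\sigma_k}\lambda_{\sigma_k}\prod_{j<k}p_j$ with $p_j=(1-\lambda_{\sigma_j})q+\lambda_{\sigma_j}w$ is correct and does give $(q,w)$-monotonicity for the fixed ranking $\bfsigma^*$; but for the $\bflambda$-direction you switch to comparing maxima, because your envelope argument invokes the sorting condition of Theorem~\ref{thrm:offline} at $(\bflambda,\tilde{q},\tilde{s})$ and at every intermediate point of the coordinate path, which $\bfsigma^*$ (optimal only at $(\bflambda,q,s)$) need not satisfy. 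You therefore conclude $\revenue(\bfsigma^*;\bflambda,q,s)\le\max_{\bfsigma}\revenue(\bfsigma;\tilde{\bflambda},\tilde{q},\tilde{s})$ rather than the stated $\le\revenue(\bfsigma^*;\tilde{\bflambda},\tilde{q},\tilde{s})$. This weaker inequality happens to be all the paper ever uses (the proposition is always immediately chained with $\revenue(\bfsigma^*;\tilde{\bflambda},\tilde{q},\tilde{s})\le\revenue(\bfsigma_t;\tilde{\bflambda},\tilde{q},\tilde{s})$, $\bfsigma_t$ being the maximizer at the tilded parameters), but it is not the literal claim, and your method does not obviously repair to give it.

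As for the comparison of methods: the paper keeps $\bfsigma^*$ fixed throughout and runs a backward induction on the tail revenue $\tilde{R}_{\bfsigma^*}(n)$, which obeys $\tilde{R}_{\bfsigma^*}(n-1)=r_{\sigma^*_{n-1}}\lambda_{\sigma^*_{n-1}}+\left(q-q(1-s)\lambda_{\sigma^*_{n-1}}\right)\tilde{R}_{\bfsigma^*}(n)$; the only consequence of optimality it needs is $\tilde{R}_{\bfsigma^*}(n-1)\ge\tilde{R}_{\bfsigma^*}(n)$ (truncating the list cannot help), which makes the coefficient of $\lambda_{\sigma^*_{n-1}}$ non-negative and lets the $\lambda$, $q$, and $qs$ replacements all go through within one inductive step. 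Your route buys a cleaner, essentially computation-free treatment of the $(q,w)$ directions, at the price of needing the full sorting characterization plus the telescoping bound for the $\bflambda$ direction, and of only controlling the optimum value rather than the fixed ranking. Within its scope the argument checks out: the affine slope formula, the bound $r_{\sigma_i}\lambda_{\sigma_i}\le r_k\lambda_k(1-p_i)/(1-p_{k_0})$ from the sort condition, the telescoping $\sum_{i>k_0}(1-p_i)\prod_{k_0<j<i}p_j\le 1$, and the final estimate $1-p_{k_0}\ge\lambda_k(\tilde{q}-\tilde{w})$ are all correct, modulo some position-versus-product index slips and the implicit use of $\tilde{q}<1$ (guaranteed by Assumption~\ref{assum:q-bound} and the cap $\tilde{q}\le 1-\epsilon_Q$ in the algorithm). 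If you need the literal fixed-$\bfsigma^*$ statement, the paper's induction is the more direct path.
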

\begin{proof} [Proof of \propositionref{prop:revenue-ranking}]
    For any $\bfsigma \dashv [N]$, $1 \le n \le k$, $0 \le u \le k - n$, define
    \begin{equation}
        J_{\bfsigma}(n, k, u; \bflambda) \triangleq \sum_{Q \subseteq [k - 1] \backslash [n - 1], |Q|=u} \left(\prod_{i \in Q}\lambda_{\sigma_i}\right)\left(\prod_{n \le i < k, i \not\in Q}\lambda_{\sigma_i}\right).
    \end{equation}
    Intuitively, $J_{\bfsigma}(n, k, u; \bflambda)$ measures the probability that a consumer purchases exactly $u$ products from the position index $n$ to $k - 1$ if ignoring the attention span and purchase budget. As a result, $J_{\bfsigma}(1, k, u; \bflambda) = H_{\bfsigma}(k, u; \bflambda)$. In addition, it is easy to check that
    \begin{small}
        \begin{equation}
            \begin{aligned}
                J_{\bfsigma}(n, n, 0; \bflambda) & = 1, \quad \forall 1 \le n \le N.\\
                J_{\bfsigma}(n - 1, k, u; \bflambda) & = \begin{cases}
                    (1 - \lambda_{\sigma_{n-1}})J_{\bfsigma}(n, k, u; \bflambda), & \text{if } u = 0. \\
                    \lambda_{\sigma_{n-1}}J_{\bfsigma}(n, k, u - 1; \bflambda), & \text{if } u = k - n + 1. \\
                    (1 - \lambda_{\sigma_{n-1}})J_{\bfsigma}(n, k, u; \bflambda) + \lambda_{\sigma_{n-1}}J_{\bfsigma}(n, k, u - 1; \bflambda), & \text{if } 0 < u < k - n + 1.
                \end{cases}
            \end{aligned}
        \end{equation}
    \end{small}

    In addition, for any $n \ge 1$, define
    \begin{equation}
        \tilde{R}_{\bfsigma}(n; \bflambda, q, s) \triangleq \sum_{k=n}^Nr_{\sigma_k}\lambda_{\sigma_k}q^{k-n}\left(\sum_{u=0}^{k-n}s^u J_{\bfsigma}(n, k, u; \bflambda)\right).
    \end{equation}
    Intuitively, $\tilde{R}_{\bfsigma}(n; \bflambda, q, s)$ measures the expected revenue if the top $n - 1$ products in the ranking list are not purchased. It is easy to verify that $\tilde{R}_{\bfsigma}(1; \bflambda, q, s) = \revenue(\bfsigma; \bflambda, q, s)$. Furthermore,
    \begin{scriptsize}
        \begin{equation}
            \begin{aligned}
                & \, \tilde{R}_{\bfsigma}(n - 1; \bflambda, q, s) \\
                = & \, \sum_{k=n - 1}^Nr_{\sigma_k}\lambda_{\sigma_k}q^{k-n + 1}\left(\sum_{u=0}^{k-n + 1}s^u J_{\bfsigma}(n - 1, k, u; \bflambda)\right) \\
                = & \, r_{\sigma_{n-1}}\lambda_{\sigma_{n-1}} + \sum_{k=n}^Nr_{\sigma_k}\lambda_{\sigma_k}q^{k-n + 1}\left(\sum_{u=0}^{k-n + 1}s^u J_{\bfsigma}(n - 1, k, u; \bflambda)\right) \\
                = & \, r_{\sigma_{n-1}}\lambda_{\sigma_{n-1}} + \sum_{k=n}^Nr_{\sigma_k}\lambda_{\sigma_k}q^{k-n + 1}\left(\sum_{u=0}^{k-n}s^u (1 - \lambda_{\sigma_{n - 1}})J_{\bfsigma}(n, k, u; \bflambda) + \sum_{u=1}^{k-n+1}s^u\lambda_{\sigma_{n-1}}J_{\bfsigma}(n, k, u-1; \bflambda)\right) \\
                = & \, r_{\sigma_{n-1}}\lambda_{\sigma_{n-1}} + q\left(1 - (1 - s)\lambda_{\sigma_{n-1}}\right)\left(\sum_{k=n}^Nr_{\sigma_k}\lambda_{\sigma_k}q^{k-n}\left(\sum_{u=0}^{k-n}s^uJ_{\bfsigma}(n,k,u; \bflambda)\right)\right) \\
                = & \, r_{\sigma_{n-1}}\lambda_{\sigma_{n-1}} + \left(q - q(1 - s)\lambda_{\sigma_{n-1}}\right)\tilde{R}_{\bfsigma}(n; \bflambda, q, s).
            \end{aligned}
        \end{equation}
    \end{scriptsize}

    Now we prove $\tilde{R}_{\bfsigma^*}(n; \bflambda, q, s) \le \tilde{R}_{\bfsigma^*}(n; \tilde{\bflambda}, \tilde{q}, \tilde{s})$, $\forall n \in [N]$ by induction. Firstly, it is easy to check that $\tilde{R}_{\bfsigma^*}(N; \bflambda, q, s) \le \tilde{R}_{\bfsigma^*}(N; \tilde{\bflambda}, \tilde{q}, \tilde{s})$. Suppose the condition holds for $\tilde{R}_{\bfsigma^*}(n; \bflambda, q, s) \le \tilde{R}_{\bfsigma^*}(n; \tilde{\bflambda}, \tilde{q}, \tilde{s})$. Consider the case for $n-1$.
    Notice that $\tilde{R}_{\bfsigma^*}(n - 1; \bflambda, q, s) \ge \tilde{R}_{\bfsigma^*}(n; \bflambda, q, s)$ (otherwise it will increase the revenue if removing the $(n-1)$-th product in the list), we can get that
    \begin{equation}
        \left(r_{\sigma_{n-1}} - q(1-s)\tilde{R}_{\bfsigma^*}(n; \bflambda, q, s)\right)\lambda_{\sigma_{n-1}} = (1 - q)\tilde{R}_{\bfsigma^*}(n; \bflambda, q, s) \ge 0.
    \end{equation}
    Hence $r_{\sigma_{n-1}} - q(1-s)\tilde{R}_{\bfsigma^*}(n; \bflambda, q, s) \ge 0$. As a result,
    \begin{equation}
        \begin{aligned}
            & \, \tilde{R}_{\bfsigma^*}(n - 1; \bflambda, q, s) \\
            = & \, r_{\sigma_{n-1}}\lambda_{\sigma_{n-1}} + \left(q - q(1 - s)\lambda_{\sigma_{n-1}}\right)\tilde{R}_{\bfsigma^*}(n; \bflambda, q, s) \\
            = & \, q\tilde{R}_{\bfsigma^*}(n; \bflambda, q, s) + \left(r_{\sigma_{n-1}} - q(1-s)\tilde{R}_{\bfsigma^*}(n; \bflambda, q, s)\right)\lambda_{\sigma_{n-1}}\\
            \le & \, q\tilde{R}_{\bfsigma^*}(n; \bflambda, q, s) + \left(r_{\sigma_{n-1}} - q(1-s)\tilde{R}_{\bfsigma^*}(n; \bflambda, q, s)\right)\tilde{\lambda}_{\sigma_{n-1}}\\
            = & \, r_{\sigma_{n-1}}\tilde{\lambda}_{\sigma_{n-1}} + \left(q - q(1 - s)\tilde{\lambda}_{\sigma_{n-1}}\right)\tilde{R}_{\bfsigma^*}(n; \bflambda, q, s) \\
            = & \, r_{\sigma_{n-1}}\tilde{\lambda}_{\sigma_{n-1}} + \left(q\left(1 - \tilde{\lambda}_{\sigma_{n-1}}\right) + qs\tilde{\lambda}_{\sigma_{n-1}}\right)\tilde{R}_{\bfsigma^*}(n; \bflambda, q, s) \\
            \le & \, r_{\sigma_{n-1}}\tilde{\lambda}_{\sigma_{n-1}} + \left(\tilde{q}\left(1 - \tilde{\lambda}_{\sigma_{n-1}}\right) + \tilde{q}\tilde{s}\tilde{\lambda}_{\sigma_{n-1}}\right)\tilde{R}_{\bfsigma^*}(n; \bflambda, q, s) \\
            \le & \, r_{\sigma_{n-1}}\tilde{\lambda}_{\sigma_{n-1}} + \left(\tilde{q} - \tilde{q}(1 - \tilde{s})\tilde{\lambda}_{\sigma_{n-1}}\right)\tilde{R}_{\bfsigma^*}(n; \tilde{\bflambda}, \tilde{q}, \tilde{s}) = \tilde{R}_{\bfsigma^*}(n - 1; \tilde{\bflambda}, \tilde{q}, \tilde{s}). \\
        \end{aligned}
    \end{equation}
    As a result,
    \begin{equation}
        \revenue\left(\bfsigma^*; \bflambda, q, s\right) = \tilde{R}_{\bfsigma^*}(1; \bflambda, q, s) \le \tilde{R}_{\bfsigma^*}(1; \tilde{\bflambda}, \tilde{q}, \tilde{s}) = \revenue\left(\bfsigma^*; \tilde{\bflambda}, \tilde{q}, \tilde{s}\right).
    \end{equation}
\end{proof}

\begin{proposition} \label{prop:revenue-bound}
    Under \assumptionref{assum:q-bound}, suppose $\tilde{\bflambda} \succeq \bflambda$, $1 - \epsilon_Q \ge \tilde{q} \ge q$, and $\tilde{s}\tilde{q} \ge sq$, then
    \begin{equation}
        \begin{aligned}
            & \, \revenue\left(\bfsigma_t; \tilde{\bflambda}, \tilde{q}, \tilde{s}\right) - \revenue\left(\bfsigma_t; \bflambda, q, s\right) \\
            \le & \, \frac{r_{\max}}{\epsilon_Q}\sum_{k=1}^N\left(\left|q - \tilde{q}\right|\bbE\left[O_{t,k}^Q\right] + 3\left|\lambda_{\sigma_{t,k}} - \tilde{\lambda}_{\sigma_{t,k}}\right|\bbE\left[O_{t, k}^{\Lambda}\right] + \left|qs - \tilde{q}\tilde{s}\right|\bbE\left[O_{t,k}^W\right]\right).
        \end{aligned}
    \end{equation}
\end{proposition}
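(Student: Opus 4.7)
The plan is to track the revenue difference via the one-step recurrence for $\tilde{R}_{\bfsigma_t}(n; \bflambda, q, s)$ established inside the proof of \propositionref{prop:revenue-ranking}. Write $\alpha_k = q(1 - (1-s)\lambda_{\sigma_{t,k}})$ and $\tilde\alpha_k = \tilde q(1 - (1-\tilde s)\tilde\lambda_{\sigma_{t,k}})$, and set $\Delta_n = \tilde{R}_{\bfsigma_t}(n; \tilde{\bflambda}, \tilde q, \tilde s) - \tilde{R}_{\bfsigma_t}(n; \bflambda, q, s)$, so the quantity of interest is $\Delta_1$. Subtracting the two recurrences and pulling out $\alpha_{n-1}\Delta_n$ gives $\Delta_{n-1} = r_{\sigma_{t,n-1}}(\tilde\lambda_{\sigma_{t,n-1}} - \lambda_{\sigma_{t,n-1}}) + \alpha_{n-1}\Delta_n + (\tilde\alpha_{n-1} - \alpha_{n-1}) \tilde{R}_{\bfsigma_t}(n; \tilde{\bflambda}, \tilde q, \tilde s)$, and with base case $\Delta_{N+1} = 0$ the unrolled formula becomes $\Delta_1 = \sum_{n=1}^N \bigl(\prod_{k=1}^{n-1} \alpha_k\bigr)\,\mathrm{err}_n$, where $\mathrm{err}_n = r_{\sigma_{t,n}}(\tilde\lambda_{\sigma_{t,n}} - \lambda_{\sigma_{t,n}}) + (\tilde\alpha_n - \alpha_n) \tilde{R}_{\bfsigma_t}(n+1; \tilde{\bflambda}, \tilde q, \tilde s)$.

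Next I would identify the prefactor $\prod_{k=1}^{n-1}\alpha_k$ with $\bbE[O_{t,n}^\Lambda]$: by \lemmaref{lem:random-variable} the random variables $\gamma_{t,k}$, $\eta_{t,k}$, $\mu_{t,k}$ are mutually independent, so conditional on having reached position $k$ the probability of continuing past it equals $(1-\lambda_{\sigma_{t,k}})q + \lambda_{\sigma_{t,k}}qs = \alpha_k$, and these conditional probabilities multiply along the path. I would then decompose $\tilde\alpha_n - \alpha_n = (\tilde q - q)(1 - \tilde\lambda_{\sigma_{t,n}}) + (\tilde q\tilde s - qs)\tilde\lambda_{\sigma_{t,n}} - q(1-s)(\tilde\lambda_{\sigma_{t,n}} - \lambda_{\sigma_{t,n}})$, which splits $\mathrm{err}_n$ into three pieces, one proportional to each of the three parameter differences. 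Both the revenues $\tilde{R}_{\bfsigma_t}(n+1;\tilde{\bflambda},\tilde q,\tilde s)$ and $\tilde{R}_{\bfsigma_t}(n+1;\bflambda,q,s)$ are bounded by $r_{\max}/\epsilon_Q$ because $q,\tilde q \le 1 - \epsilon_Q$ under \assumptionref{assum:q-bound}; in particular, the coefficient of $(\tilde\lambda_{\sigma_{t,n}} - \lambda_{\sigma_{t,n}})$, namely $r_{\sigma_{t,n}} - q(1-s)\tilde{R}_{\bfsigma_t}(n+1;\tilde{\bflambda},\tilde q,\tilde s)$, is the difference of two nonnegative numbers each bounded by $r_{\max}/\epsilon_Q$, and so has magnitude at most $r_{\max}/\epsilon_Q$.

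Finally I would convert the factors $(1 - \tilde\lambda_{\sigma_{t,n}})$ and $\tilde\lambda_{\sigma_{t,n}}$ into $(1 - \lambda_{\sigma_{t,n}})$ and $\lambda_{\sigma_{t,n}}$ by the triangle inequality, using $|\tilde q - q| \le 1$ and $|\tilde q\tilde s - qs| \le 1$ so that each substitution charges at most one extra copy of $|\tilde\lambda_{\sigma_{t,n}} - \lambda_{\sigma_{t,n}}|$; combined with the copy already contributed by the first piece, this is the source of the factor $3$ in front of $|\tilde\lambda - \lambda|$. The identifications $\bbE[O_{t,n}^\Lambda](1 - \lambda_{\sigma_{t,n}}) = \bbE[O_{t,n}^Q]$ and $\bbE[O_{t,n}^\Lambda]\lambda_{\sigma_{t,n}} = \bbE[O_{t,n}^W]$ then recast the prefactors into the exact form stated in the proposition. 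The main obstacle is bookkeeping: carrying out the decomposition of $\tilde\alpha_n - \alpha_n$ without sign errors and justifying the sign-insensitive bound on $|r_{\sigma_{t,n}} - q(1-s)\tilde{R}_{\bfsigma_t}(n+1;\tilde{\bflambda},\tilde q,\tilde s)|$; once these are in place the remaining inequalities are routine.
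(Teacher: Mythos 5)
Your proof is correct, and it reaches the bound by a route that is recognizably dual to the paper's rather than identical to it. The paper works \emph{forward}: it writes the revenue as $\sum_k r_{\sigma_k}\lambda_{\sigma_k}A_{\bfsigma}(k;q,s,\bflambda)$ with $A_{\bfsigma}(k)=\prod_{i<k}q(1-(1-s)\lambda_{\sigma_i})=\bbE[O^{\Lambda}_{t,k}]$ the probability of reaching position $k$, peels off the $\lambda_{\sigma_k}$ factor by the triangle inequality, telescopes the forward recursion $A(k)=\alpha_{k-1}A(k-1)$ to bound $|A(k;q,s,\bflambda)-A(k;\tilde q,\tilde s,\tilde{\bflambda})|$, and extracts the $1/\epsilon_Q$ from the geometric tail $\sum_{i>k}\tilde q^{\,i-k-1}$ after interchanging the order of summation. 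You instead perturb the \emph{backward} revenue-to-go recursion $\tilde R(n-1)=r_{\sigma_{n-1}}\lambda_{\sigma_{n-1}}+\alpha_{n-1}\tilde R(n)$ (borrowed from the proof of the monotonicity proposition), unroll $\Delta_1=\sum_n\bigl(\prod_{k<n}\alpha_k\bigr)\mathrm{err}_n$, and get the $1/\epsilon_Q$ from the a priori bound $\tilde R(n+1;\tilde{\bflambda},\tilde q,\tilde s)\le r_{\max}/\epsilon_Q$. Your decomposition of $\tilde\alpha_n-\alpha_n$ is algebraically the same three-way split the paper uses, and your accounting of the factor $3$ ($1$ from the coefficient $r_{\sigma_n}-q(1-s)\tilde R(n+1;\tilde\cdot)$, plus $1$ each from converting $1-\tilde\lambda$ and $\tilde\lambda$ to their untilded counterparts at unit cost) is valid, versus the paper's $1+2$ accounting. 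The backward route is arguably cleaner in that the error at each position is multiplied by exactly the reach probability $\bbE[O^{\Lambda}_{t,n}]$ from the outset, avoiding the summation interchange; the forward route has the advantage of reusing the explicit formula for $\bbE[O^{\Lambda}_{t,k}]$, $\bbE[O^{Q}_{t,k}]$, $\bbE[O^{W}_{t,k}]$ already needed elsewhere. All the individual bounds you invoke ($|\tilde q-q|\le 1$, $|\tilde q\tilde s-qs|\le 1$, the nonnegativity and $r_{\max}/\epsilon_Q$ bound on both terms of the coefficient of $\tilde\lambda-\lambda$, and the product identification $\prod_{k<n}\alpha_k=\bbE[O^{\Lambda}_{t,n}]$) check out.
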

\begin{proof} [Proof of \propositionref{prop:revenue-bound}]
    For simplicity, we use $\bfsigma$ to denote $\bfsigma_t$ in this proof.
    According to \equationref{eq:revenue-another},
    \begin{equation}
        \revenue\left(\bfsigma; \bflambda, q, s\right) = \sum_{k=1}^N r_{\sigma_k}\lambda_{\sigma_k} q^{k-1}\left(\sum_{u=0}^{k-1}s^uH_{\bfsigma}(k, u; \bflambda)\right),
    \end{equation}
    and $H_{\bfsigma}(k, u; \bflambda)$ is defined in \equationref{eq:defn-H}. As a result,
    \begin{equation} \label{eq:tmp-revenue}
        \begin{aligned}
            & \, \revenue\left(\bfsigma; \tilde{\bflambda}, \tilde{q}, \tilde{s}\right) - \revenue\left(\bfsigma; \bflambda, q, s\right) \\
            = & \, \sum_{k=1}^N r_{\sigma_k}\tilde{\lambda}_{\sigma_k} \tilde{q}^{k-1}\left(\sum_{u=0}^{k-1}\tilde{s}^uH_{\bfsigma}(k, u; \tilde{\bflambda})\right) - \sum_{k=1}^N r_{\sigma_k}\lambda_{\sigma_k} q^{k-1}\left(\sum_{u=0}^{k-1}s^uH_{\bfsigma}(k, u; \bflambda)\right) \\
            = & \, \sum_{k=1}^N r_{\sigma_k} \left(\tilde{\lambda}_{\sigma_k} \tilde{q}^{k-1}\left(\sum_{u=0}^{k-1}\tilde{s}^uH_{\bfsigma}(k, u; \tilde{\bflambda})\right) - \lambda_{\sigma_k} q^{k-1}\left(\sum_{u=0}^{k-1}s^uH_{\bfsigma}(k, u; \bflambda)\right)\right).
        \end{aligned}
    \end{equation}
    Note that
    \begin{align}
        \bbE\left[O_{t, k}^{\Lambda}\right] & = P(V_t \ge k)\sum_{u=0}^{k-1} P(B_t \ge u + 1)H_{\bfsigma}(k, u; \bflambda) = q^{k-1}\sum_{u=0}^{k-1} s^uH_{\bfsigma}(k, u; \bflambda). \\
        \bbE\left[O_{t,k}^Q \right] & = \bbE\left[O_{t, k}^{\Lambda}\right]P\left(\gamma_{t,k} = 0\right) = \left(1 - \lambda_{\sigma_k}\right)q^{k-1}\sum_{u=0}^{k-1} s^uH_{\bfsigma}(k, u; \bflambda). \\
        \bbE\left[O_{t,k}^W \right] & = \bbE\left[O_{t, k}^{\Lambda}\right]P\left(\gamma_{t,k} = 1\right) = \lambda_{\sigma_k}q^{k-1}\sum_{u=0}^{k-1} s^uH_{\bfsigma}(k, u; \bflambda).
    \end{align}

    For any $1 \le k \le N$,
    \begin{equation} \label{eq:tmp-second-term-revenue}
        \begin{aligned}
            & \, \left|\tilde{\lambda}_{\sigma_k} \tilde{q}^{k-1}\left(\sum_{u=0}^{k-1}\tilde{s}^uH_{\bfsigma}(k, u; \tilde{\bflambda})\right) - \lambda_{\sigma_k} q^{k-1}\left(\sum_{u=0}^{k-1}s^uH_{\bfsigma}(k, u; \bflambda)\right)\right| \\
            \le & \, \left|\tilde{\lambda}_{\sigma_k} - \lambda_{\sigma_k}\right|\bbE\left[O_{t, k}^{\Lambda}\right] + \tilde{\lambda}_{\sigma_k}\left|\tilde{q}^{k-1}\left(\sum_{u=0}^{k-1}\tilde{s}^uH_{\bfsigma}(k, u; \tilde{\bflambda})\right) - q^{k-1}\left(\sum_{u=0}^{k-1}s^uH_{\bfsigma}(k, u; \bflambda)\right)\right| \\
            \le & \, \left|\tilde{\lambda}_{\sigma_k} - \lambda_{\sigma_k}\right|\bbE\left[O_{t, k}^{\Lambda}\right] + \left|\tilde{q}^{k-1}\left(\sum_{u=0}^{k-1}\tilde{s}^uH_{\bfsigma}(k, u; \tilde{\bflambda})\right) - q^{k-1}\left(\sum_{u=0}^{k-1}s^uH_{\bfsigma}(k, u; \bflambda)\right)\right|.
        \end{aligned}
    \end{equation}
    Here the first inequality is due to the triangle inequality of $|\cdot|$ and the second inequality is according to the fact that $\tilde{\lambda}_{\sigma_k} \le 1$. Now for any $1 \le k \le N$, define
    \begin{equation}
        A_{\bfsigma}(k; q, s, \bflambda) \triangleq q^{k-1}\left(\sum_{u=0}^{k-1}s^uH_{\bfsigma}(k, u; \bflambda)\right).
    \end{equation}
    We can get
    \begin{equation}
        \begin{aligned}
            & \, A_{\bfsigma}(k; q, s, \bflambda) \\
            = & \, q^{k-1}\left(\sum_{u=0}^{k-1}s^uH_{\bfsigma}(k, u; \bflambda)\right) \\
            = & \, q^{k-1}\left(\sum_{u=0}^{k-2}s^u\left(1 - \lambda_{\sigma_{k-1}}\right)H_{\bfsigma}(k - 1, u; \bflambda) + \sum_{u=1}^{k-1}s^u\lambda_{\sigma_{k-1}}H_{\bfsigma}(k-1, u - 1; \bflambda)\right) \\
            = & \, q^{k-1}\left(\sum_{u=0}^{k-2}s^u\left(1 - \lambda_{\sigma_{k-1}}\right)H_{\bfsigma}(k - 1, u; \bflambda) + s\sum_{u=0}^{k-2}s^u\lambda_{\sigma_{k-1}}H_{\bfsigma}(k-1, u; \bflambda)\right) \\
            = & \, q\left(1 - \lambda_{\sigma_{k-1}} + s\lambda_{\sigma_{k-1}}\right)A_{\bfsigma}(k - 1; q, s, \bflambda).
        \end{aligned}
    \end{equation}
    As a result, for any $1 < k \le N$,
    \begin{small}
        \begin{equation} \label{eq:tmp-A-sigma}
            \begin{aligned}
                & \, \left|A_{\bfsigma}(k; q, s, \bflambda) - A_{\bfsigma}(k; \tilde{q}, \tilde{s}, \tilde{\bflambda})\right| \\
                = & \, \left|q\left(1 - \lambda_{\sigma_{k-1}} + s\lambda_{\sigma_{k-1}}\right)A_{\bfsigma}(k - 1; q, s, \bflambda) - \tilde{q}\left(1 - \tilde{\lambda}_{\sigma_{k-1}} + \tilde{s}\tilde{\lambda}_{\sigma_{k-1}}\right)A_{\bfsigma}(k - 1; \tilde{q}, \tilde{s}, \tilde{\bflambda})\right| \\
                \le & \, \left|q\left(1 - \lambda_{\sigma_{k-1}} + s\lambda_{\sigma_{k-1}}\right) - \tilde{q}\left(1 - \tilde{\lambda}_{\sigma_{k-1}} + \tilde{s}\tilde{\lambda}_{\sigma_{k-1}}\right)\right|A_{\bfsigma}(k - 1; q, s, \bflambda) \\
                & \, + \tilde{q}\left(1 - \tilde{\lambda}_{\sigma_{k-1}} + \tilde{s}\tilde{\lambda}_{\sigma_{k-1}}\right)\left|A_{\bfsigma}(k - 1; q, s, \bflambda) - A_{\bfsigma}(k - 1; \tilde{q}, \tilde{s}, \tilde{\bflambda})\right|. \\
            \end{aligned}
        \end{equation}
    \end{small}
    Here the last inequality is due the triangle inequality of $|\cdot|$. Note that
    \begin{small}
        \begin{equation} \label{eq:tmp-term-one-A-sigma}
            \begin{aligned}
                & \, \left|q\left(1 - \lambda_{\sigma_{k-1}} + s\lambda_{\sigma_{k-1}}\right) - \tilde{q}\left(1 - \tilde{\lambda}_{\sigma_{k-1}} + \tilde{s}\tilde{\lambda}_{\sigma_{k-1}}\right)\right|A_{\bfsigma}(k - 1; q, s, \bflambda) \\
                \le & \, \left(\left|q\left(1 - \lambda_{\sigma_{k-1}}\right) - \tilde{q}\left(1 - \tilde{\lambda}_{\sigma_{k-1}}\right)\right| + \left|qs\lambda_{\sigma_{k-1}} - \tilde{q}\tilde{s}\tilde{\lambda}_{\sigma_{k-1}}\right|\right)A_{\bfsigma}(k - 1; q, s, \bflambda) \\
                \le & \, \left(\left|q - \tilde{q}\right|(1 - \lambda_{\sigma_{k-1}}) + \tilde{q}\left|\lambda_{\sigma_{k-1}} - \tilde{\lambda}_{\sigma_{k-1}}\right| + \left|qs - \tilde{q}\tilde{s}\right|\lambda_{\sigma_{k-1}} + \tilde{q}\tilde{s}\left|\lambda_{\sigma_{k-1}} - \tilde{\lambda}_{\sigma_{k-1}}\right|\right)A_{\bfsigma}(k - 1; q, s, \bflambda) \\
                \le & \, \left(\left|q - \tilde{q}\right|(1 - \lambda_{\sigma_{k-1}}) + 2\left|\lambda_{\sigma_{k-1}} - \tilde{\lambda}_{\sigma_{k-1}}\right| + \left|qs - \tilde{q}\tilde{s}\right|\lambda_{\sigma_{k-1}}\right)A_{\bfsigma}(k - 1; q, s, \bflambda) \\
                = & \, \left|q - \tilde{q}\right|\bbE\left[O_{t, k-1}^Q\right] + 2\left|\lambda_{\sigma_{k-1}} - \tilde{\lambda}_{\sigma_{k-1}}\right|\bbE\left[O_{t, k-1}^{\Lambda}\right] + \left|qs - \tilde{q}\tilde{s}\right|\bbE\left[O_{t, k-1}^W\right].
            \end{aligned}
        \end{equation}
    \end{small}
    Here the first two inequalities are due to the triangle inequality of $|\cdot|$ and the third inequality is due to that fact that $\tilde{q}, \tilde{s} \le 1$. Combining \equationsref{eq:tmp-A-sigma} and \eqref{eq:tmp-term-one-A-sigma}, we can get
    \begin{small}
        \begin{equation}
            \begin{aligned}
                & \, \left|A_{\bfsigma}(k; q, s, \bflambda) - A_{\bfsigma}(k; \tilde{q}, \tilde{s}, \tilde{\bflambda})\right| \\
                \le & \, \sum_{u=1}^{k-1}\left(\left(\left|q - \tilde{q}\right|\bbE\left[O_{t, u}^Q\right] + 2\left|\lambda_{\sigma_{u}} - \tilde{\lambda}_{\sigma_{u}}\right|\bbE\left[O_{t, u}^{\Lambda}\right] + \left|qs - \tilde{q}\tilde{s}\right|\bbE\left[O_{t, u}^W\right]\right)\left(\prod_{i=u+1}^{k-1}\tilde{q}\left(1 - \tilde{\lambda}_{\sigma_{i}} + \tilde{s}\tilde{\lambda}_{\sigma_{i}}\right)\right)\right) \\
                \le & \, \sum_{u=1}^{k-1}\left(\left(\left|q - \tilde{q}\right|\bbE\left[O_{t, u}^Q\right] + 2\left|\lambda_{\sigma_{u}} - \tilde{\lambda}_{\sigma_{u}}\right|\bbE\left[O_{t, u}^{\Lambda}\right] + \left|qs - \tilde{q}\tilde{s}\right|\bbE\left[O_{t, u}^W\right]\right)\left(\prod_{i=u+1}^{k-1}\tilde{q}\right)\right) \\
                = & \, \sum_{u=1}^{k-1}\left(\left|q - \tilde{q}\right|\bbE\left[O_{t, u}^Q\right] + 2\left|\lambda_{\sigma_{u}} - \tilde{\lambda}_{\sigma_{u}}\right|\bbE\left[O_{t, u}^{\Lambda}\right] + \left|qs - \tilde{q}\tilde{s}\right|\bbE\left[O_{t, u}^W\right]\right)\tilde{q}^{k - u - 1}.
            \end{aligned}
        \end{equation}
    \end{small}
    Combining the above inequality with \equationsref{eq:tmp-revenue} and \eqref{eq:tmp-second-term-revenue}, we can get that
    \begin{small}
        \begin{equation}
            \begin{aligned}
                & \, \revenue\left(\bfsigma; \tilde{\bflambda}, \tilde{q}, \tilde{s}\right) - \revenue\left(\bfsigma; \bflambda, q, s\right) \\
                \le & \, \sum_{k=1}^N r_{\max}\left|\tilde{\lambda}_{\sigma_k} - \lambda_{\sigma_k}\right|\bbE\left[O_{t, k}^{\Lambda}\right] \\
                & \, + \sum_{k=2}^Nr_{\sigma_k}\tilde{\lambda}_{\sigma_k}\left(\sum_{u=1}^{k-1}\left(\left|q - \tilde{q}\right|\bbE\left[O_{t, u}^Q\right] + 2\left|\lambda_{\sigma_{u}} - \tilde{\lambda}_{\sigma_{u}}\right|\bbE\left[O_{t, u}^{\Lambda}\right] + \left|qs - \tilde{q}\tilde{s}\right|\bbE\left[O_{t, u}^W\right]\right)\tilde{q}^{k - u - 1}\right) \\
                = & \, \sum_{k=1}^N r_{\max}\left|\tilde{\lambda}_{\sigma_k} - \lambda_{\sigma_k}\right|\bbE\left[O_{t, k}^{\Lambda}\right] \\
                & \, + \sum_{k=1}^{N-1} \left(\left|q - \tilde{q}\right|\bbE\left[O_{t,k}^Q\right] + 2\left|\lambda_{\sigma_{k}} - \tilde{\lambda}_{\sigma_{k}}\right|\bbE\left[O_{t, k}^{\Lambda}\right] + \left|qs - \tilde{q}\tilde{s}\right|\bbE\left[O_{t,k}^W\right]\right)\left(\sum_{i=k+1}^Nr_{\sigma_i}\tilde{\lambda}_{\sigma_i}\tilde{q}^{i - k - 1}\right) \\
                \le & \, \sum_{k=1}^N r_{\max}\left|\tilde{\lambda}_{\sigma_k} - \lambda_{\sigma_k}\right|\bbE\left[O_{t, k}^{\Lambda}\right] \\
                & \, + \sum_{k=1}^{N-1}r_{\max} \left(\left|q - \tilde{q}\right|\bbE\left[O_{t,k}^Q\right] + 2\left|\lambda_{\sigma_{k}} - \tilde{\lambda}_{\sigma_{k}}\right|\bbE\left[O_{t, k}^{\Lambda}\right] + \left|qs - \tilde{q}\tilde{s}\right|\bbE\left[O_{t,k}^W\right]\right)\left(\sum_{i=k+1}^N\tilde{q}^{i - k - 1}\right) \\
                \le & \, \sum_{k=1}^N r_{\max}\left|\tilde{\lambda}_{\sigma_k} - \lambda_{\sigma_k}\right|\bbE\left[O_{t, k}^{\Lambda}\right] \\
                & \, + \sum_{k=1}^{N-1}r_{\max} \left(\left|q - \tilde{q}\right|\bbE\left[O_{t,k}^Q\right] + 2\left|\lambda_{\sigma_{k}} - \tilde{\lambda}_{\sigma_{k}}\right|\bbE\left[O_{t, k}^{\Lambda}\right] + \left|qs - \tilde{q}\tilde{s}\right|\bbE\left[O_{t,k}^W\right]\right)\frac{1}{\epsilon_Q} \\
                \le & \, \frac{r_{\max}}{\epsilon_Q}\sum_{k=1}^N\left(\left|q - \tilde{q}\right|\bbE\left[O_{t,k}^Q\right] + 3\left|\lambda_{\sigma_{k}} - \tilde{\lambda}_{\sigma_{k}}\right|\bbE\left[O_{t, k}^{\Lambda}\right] + \left|qs - \tilde{q}\tilde{s}\right|\bbE\left[O_{t,k}^W\right]\right).
            \end{aligned}
        \end{equation}
    \end{small}
    Now the claim follows.
\end{proof}

\begin{proof} [Proof of \theoremref{thrm:regret-online}]
    For each timestamp $t$, define the favorable event as
    \begin{small}
        \begin{equation}
            \xi_t \triangleq \left\{\left|q - \hat{q}_t \right| \le \sqrt{\frac{2\log t}{D_t^Q}} , \left|qs - \hat{w}_t \right| \le \sqrt{\frac{2\log t}{D_t^W}}, \left|\lambda_k - \hat{\lambda}_{t,k} \right| \le \sqrt{\frac{2 \log t}{C_{t,k}}}, \forall k \in [N]\right\}.
        \end{equation}
    \end{small}
    Let $\bar{\xi}_t$ denote the complement event of $\xi_t$.
    \begin{equation} \label{eq:tmp-regret}
        \begin{aligned}
            \bbE[\regret_T] = & \, \sum_{t=1}^T\bbE\left[\left(\revenue\left(\bfsigma^*; \bflambda, q, s\right) - \revenue\left(\bfsigma_t; \bflambda, q, s\right)\right)\bbI[\xi_{t-1}]\right] \\
            & \, + \sum_{t=1}^T\bbE\left[\left(\revenue\left(\bfsigma^*; \bflambda, q, s\right) - \revenue\left(\bfsigma_t; \bflambda, q, s\right)\right)\bbI\left[\bar{\xi}_{t-1}\right]\right]. \\
        \end{aligned}
    \end{equation}
    On the one hand, consider the first term in the RHS of \equationref{eq:tmp-regret}, when the event $\xi_{t-1}$ is satisfied, we have $q \le \tilde{q}_{t-1}$, $qs \le \tilde{w}_{t-1}$, and $\lambda_{k} \le \tilde{\lambda}_{t,k}, \forall k \in [N]$. As a result, according to \propositionref{prop:revenue-ranking},
    \begin{small}
        \begin{equation}
            \revenue\left(\bfsigma^*; \bflambda, q, s\right) \le \revenue\left(\bfsigma^*; \tilde{\bflambda}_{t-1}, \tilde{q}_{t-1}, \tilde{s}_{t-1}\right) \le \revenue\left(\bfsigma_t; \tilde{\bflambda}_{t-1}, \tilde{q}_{t-1}, \tilde{s}_{t-1}\right).
        \end{equation}
    \end{small}
    Hence, according to \propositionref{prop:revenue-bound},
    \begin{scriptsize}
        \begin{equation} \label{eq:tmp-regret-first-term}
            \begin{aligned}
                & \, \sum_{t=1}^T\bbE\left[\left(\revenue\left(\bfsigma^*; \bflambda, q, s\right) - \revenue\left(\bfsigma_t; \bflambda, q, s\right)\right)\bbI[\xi_{t-1}]\right] \\
                \le & \, \sum_{t=1}^T\bbE\left[\left(\revenue\left(\bfsigma_t; \tilde{\bflambda}_{t-1}, \tilde{q}_{t-1}, \tilde{s}_{t-1}\right) - \revenue\left(\bfsigma_t; \bflambda, q, s\right)\right)\bbI[\xi_{t-1}]\right] \\
                \le & \, \frac{r_{\max}}{\epsilon_Q}\sum_{t=1}^T\bbE\left[\sum_{k=1}^N\left(\left|q - \tilde{q}_{t-1}\right|\bbE\left[O_{t,k}^Q\right] + 3\left|\lambda_{\sigma_{t,k}} - \tilde{\lambda}_{t-1,\sigma_{t,k}}\right|\bbE\left[O_{t, k}^{\Lambda}\right] + \left|qs - \tilde{q}_{t-1}\tilde{s}_{t-1}\right|\bbE\left[O_{t,k}^W\right]\right)\bbI[\xi_{t-1}]\right] \\
                \le & \, \frac{2r_{\max}}{\epsilon_Q}\sum_{t=1}^T\bbE\left[\sum_{k=1}^N\left(\min\left\{\sqrt{\frac{2\log t}{D^Q_{t-1}}}, 1\right\}O_{t,k}^Q + 3\min\left\{\sqrt{\frac{2\log t}{C_{t-1,\sigma_{t,k}}}}, 1\right\} O_{t, k}^{\Lambda} + \min\left\{\sqrt{\frac{2\log t}{D^W_{t-1}}}, 1\right\}O_{t,k}^W\right)\right] \\
                \le & \, \frac{2r_{\max}\sqrt{2\log T}}{\epsilon_Q}\sum_{t=1}^T\bbE\left[\sum_{k=1}^N\left(\min\left\{\sqrt{\frac{1}{D^Q_{t-1}}}, 1\right\}O_{t,k}^Q + 3\min\left\{\sqrt{\frac{1}{C_{t-1,\sigma_{t,k}}}}, 1\right\} O_{t, k}^{\Lambda} + \min\left\{\sqrt{\frac{1}{D^W_{t-1}}},1\right\}O_{t,k}^W\right)\right].
            \end{aligned}
        \end{equation}
    \end{scriptsize}
    Let $\kappa_{t}^Q \triangleq \sum_{k=1}^N O_{t,k}^Q \le N$ and we have $D_t^Q = D_{t-1}^Q + \kappa_{t}^Q$. As a result,
    \begin{equation} \label{eq:tmp-regret-view}
        \begin{aligned}
            & \, \bbE\left[\sum_{t=1}^T\sum_{k=1}^N\min\left\{\sqrt{\frac{1}{D^Q_{t-1}}}, 1\right\}O_{t,k}^Q\right] \\
            = & \, \bbE\left[\sum_{t=1}^T\min\left\{\sqrt{\frac{1}{D^Q_{t-1}}}, 1\right\}\kappa_{t}^Q\right] \\
            \le & \, 1 + N + \sum_{t=2}^T N\sqrt{\frac{1}{(t-1)N}} \\
            \le & \, 1 + N + \sqrt{N}\left(1 + \int_{t=1}^{T-1}\sqrt{\frac{1}{t}}\mathrm{d}t\right) \le C_1N + C_2\sqrt{TN},
        \end{aligned}
    \end{equation}
    for some constants $C_1$ and $C_2$. Similarly, we have
    \begin{equation} \label{eq:tmp-regret-buy}
        \bbE\left[\sum_{t=1}^T\sum_{k=1}^N\min\left\{\sqrt{\frac{1}{D^W_{t-1}}}, 1\right\}O_{t,k}^W\right] \le C_1N + C_2\sqrt{TN}.
    \end{equation}
    In addition, let $o_{t,k}^{\Lambda}\triangleq O^{\Lambda}_{t,\sigma^{-1}_{t,k}}$ represent whether the product $k$ is viewed in the $t$-th timestamp. As a result, $C_{t,k} = C_{t-1, k} + o_{t,k}^{\Lambda}$. Therefore,
    \begin{equation} \label{eq:tmp-regret-product}
        \begin{aligned}
            & \, \bbE\left[\sum_{t=1}^T\sum_{k=1}^N\min\left\{\sqrt{\frac{1}{C_{t-1,\sigma_{t,k}}}}, 1\right\} O_{t, k}^{\Lambda}\right] \\
            = & \, \bbE\left[\sum_{k=1}^N\sum_{t=1}^T\min\left\{\sqrt{\frac{1}{C_{t-1, k}}}, 1\right\}o_{t,k}^{\Lambda}\right] \\
            \le & \, \bbE\left[\sum_{k=1}^N\left(1 + \sum_{t=1}^T\sqrt{\frac{1}{t}}\right)\right] \le C_3N\sqrt{T},
        \end{aligned}
    \end{equation}
    for a constant $C_3$. On the other hand, according to \propositionref{prop:estimation-bound},
    \begin{equation} \label{eq:tmp-regret-second-term}
        \begin{aligned}
            & \, \sum_{t=1}^T\bbE\left[\left(\revenue\left(\bfsigma^*; \bflambda, q, s\right) - \revenue\left(\bfsigma_t; \bflambda, q, s\right)\right)\bbI\left[\bar{\xi}_{t-1}\right]\right] \\
            \le & \, \revenue\left(\bfsigma^*; \bflambda, q, s\right)\sum_{t=1}^T\bbE\left[\bbI\left[\bar{\xi}_{t-1}\right]\right] \le \frac{r_{\max}}{\epsilon_Q} \sum_{t=1}^T\bbE\left[\bbI\left[\bar{\xi}_{t-1}\right]\right] \\
            \le & \,  \frac{r_{\max}}{\epsilon_Q}\left(1+\sum_{t=1}^T \frac{6N}{t^3}\right) \le C_4 \cdot  \frac{Nr_{\max}}{\epsilon_Q}.
        \end{aligned}
    \end{equation}
    for a constant $C_4$. Now the claim follows after combining \equationsref{eq:tmp-regret-first-term}, \eqref{eq:tmp-regret-view}, \eqref{eq:tmp-regret-buy}, \eqref{eq:tmp-regret-product}, and \eqref{eq:tmp-regret-second-term}.
\end{proof}

\subsection{Proof of \texorpdfstring{\propositionref{prop:estimation-contextual-online}}{Proposition}}
\begin{proof}
    The claim follows from standard routines in linear bandit methods. We first consider the $\hat{\beta}_t^{\Lambda} - \beta^{\Lambda}$ term. We can view the tuple $(t, k)$ as the new timestamp.
    Let $\epsilon_{t,k} \triangleq y_{t, \sigma_{t,k}}^{\top}\beta^{\Lambda} - \gamma_{t,k}$.
    According to \lemmaref{lem:random-variable}, $\bbE[\epsilon_{t,k} | y_{t, \sigma_{t,k}}, \calH_{t,k}] = 0$ where $\calH_{t,k}$ denotes the historical observations before the $t$-th consumer views the $k$-th product. In addition, it is bounded in $[q_t - 1, q_t]$ and hence sub-gaussian with $R = 1 / 2$. As a result, according to \citep[Theorem 2]{abbasi2011improved}, under \assumptionref{assum:bounded} with parameter $U$, for any $\delta \in (0, 1)$, with probability at least $1 - \delta$, for any $t \ge 0$,
    \begin{equation}
        \begin{aligned}
            \left\|\hat{\beta}_t^{\Lambda} - \beta^{\Lambda}\right\|_{\Sigma_t^{\Lambda}} & \le \frac{1}{2}\sqrt{2 \log \frac{\det\left(\Sigma_t^{\Lambda}\right)^{1/2}\det\left(\alpha_{\Lambda}\bfI_{m_y}\right)^{-1/2}}{\delta}} + \alpha_{\Lambda}^{1/2}U \\
            & = \frac{1}{2}\sqrt{\log \frac{\det\left(\Sigma_t^{\Lambda}\right)}{\delta^2\alpha_{\Lambda}^{m_y}}} + \alpha_{\Lambda}^{1/2}U.
        \end{aligned}
    \end{equation}
    According to \citep[Lemma 10]{abbasi2011improved}, we have
    \begin{equation}
        \det\left(\Sigma_t^{\Lambda}\right) \le \left(\alpha_{\Lambda} + \frac{Nt}{m_y}\right)^{m_y}.
    \end{equation}
    As a result, by letting $\delta = t^{-2}$, for all $t > 0$, with probability at least $1 - t^{-2}$,
    \begin{equation}
        \left\|\hat{\beta}_t^{\Lambda} - \beta^{\Lambda}\right\|_{\Sigma_t^{\Lambda}} \le \frac{1}{2}\sqrt{m_y\log \left(1 +\frac{Nt}{m_y\alpha_{\Lambda}}\right) + 4\log t} + \alpha_{\Lambda}^{1/2}U \le \tau(t, m_y, \alpha_{\Lambda}).
    \end{equation}
    Similarly, we have that with probability at least $1 - t^{-2}$,
    \begin{equation}
        \left\|\hat{\beta}_t^Q - \beta^Q\right\|_{\Sigma_t^Q} \le \frac{1}{2}\sqrt{m_x\log \left(1 +\frac{Nt}{m_x\alpha_Q}\right) + 4\log t} + \alpha_Q^{1/2}U \le \tau(t, m_x, \alpha_Q).
    \end{equation}
    In addition, because
    \begin{equation}
        \|x_t\| \le 1 \Longrightarrow \|z_t\| \le 1 \quad \text{and} \quad \|\beta^S\|, \|\beta^Q\| \le U \Longrightarrow \|\beta^W\| \le U^2,
    \end{equation}
    we have that with probability at least $1 - t^{-2}$,
    \begin{equation}
        \left\|\hat{\beta}_t^W - \beta^W\right\|_{\Sigma_t^W} \le \frac{1}{2}\sqrt{m_x^2\log \left(1 +\frac{Nt}{m_x^2\alpha_W}\right) + 4\log t} + \alpha_W^{1/2}U^2 \le \tau(t, m_x^2, \alpha_W).
    \end{equation}
    Now the claim follows.
\end{proof}

\subsection{Proof of \texorpdfstring{\theoremref{thrm:regret-online-contextual}}{Theorem}}
\begin{proof}
    Similar to the proof of \theoremref{thrm:regret-online}, for each timestamp $t$, define the favorable event as
    \begin{scriptsize}
        \begin{equation}
            \xi_t \triangleq \left\{\left\|\hat{\beta}_t^{\Lambda} - \beta^{\Lambda}\right\|_{\Sigma_t^{\Lambda}} \le \tau(t, m_y, \alpha_{\Lambda}), \left\|\hat{\beta}_t^Q - \beta^Q\right\|_{\Sigma_t^Q} \le \tau(t, m_x, \alpha_Q), \text{and } \left\|\hat{\beta}_t^W - \beta^W\right\|_{\Sigma_t^W} \le \tau(t, m_x^2, \alpha_W)\right\}.
        \end{equation}
    \end{scriptsize}
    According to \propositionref{prop:estimation-contextual-online}, we have $P\left(\xi_t\right) \ge 1 - 3t^{-2}$. Let $\bar{\xi}_t$ denote the complement event of $\xi_t$.
    \begin{equation} \label{eq:tmp-regret-contextual}
        \begin{aligned}
            \bbE[\regret_T | \bfX_T, \bfY_T] = & \, \sum_{t=1}^T\bbE\left[\left(\revenue\left(\bfsigma_t^*; \bflambda_t, q_t, s_t\right) - \revenue\left(\bfsigma_t; \bflambda_t, q_t, s_t\right)\right)\bbI[\xi_{t-1}]\right] \\
            & \, + \sum_{t=1}^T\bbE\left[\left(\revenue\left(\bfsigma_t^*; \bflambda_t, q_t, s_t\right) - \revenue\left(\bfsigma_t; \bflambda_t, q_t, s_t\right)\right)\bbI\left[\bar{\xi}_{t-1}\right]\right]. \\
        \end{aligned}
    \end{equation}
    On the one hand, consider the first term in the RHS of \equationref{eq:tmp-regret-contextual}, when the event $\xi_{t}$ is satisfied, we can show that $\tilde{q}_t \ge q_t$. On the one hand, if $\tilde{q}_t = 1 - \epsilon_Q$, it is obvious that $\tilde{q}_t \ge q_t$. On the other hand, otherwise,
    \begin{equation}
        \begin{aligned}
        \tilde{q}_t - q_t & = x_t^{\top}\left(\hat{\beta}_{t-1}^Q - \beta^Q\right) + \tau(t-1, m_x, \alpha_Q)\|x_t\|_{\left(\Sigma_{t-1}^Q\right)^{-1}} \\
        & = x_t^{\top}\left(\Sigma_{t-1}^Q\right)^{-1/2}\left(\Sigma_{t-1}^Q\right)^{1/2}\left(\hat{\beta}_{t-1}^Q - \beta^Q\right) + \tau(t-1, m_x, \alpha_Q)\|x_t\|_{\left(\Sigma_{t-1}^Q\right)^{-1}} \\
        & = \left(\left(\Sigma_{t-1}^Q\right)^{-1/2}x_t\right)^{\top}\left(\Sigma_{t-1}^Q\right)^{1/2}\left(\hat{\beta}_{t-1}^Q - \beta^Q\right) + \tau(t-1, m_x, \alpha_Q)\|x_t\|_{\left(\Sigma_{t-1}^Q\right)^{-1}} \\
        & \ge - \left\|x_t\right\|_{\left(\Sigma_{t-1}^Q\right)^{-1}}\left\|\hat{\beta}_{t-1}^Q - \beta^Q\right\|_{\Sigma_{t-1}^Q} + \tau(t-1, m_x, \alpha_Q)\|x_t\|_{\left(\Sigma_{t-1}^Q\right)^{-1}} \ge 0.
        \end{aligned}
    \end{equation}
    Similarly, we have that $w_t \le \tilde{w}_{t}$ and $\lambda_{t,k} \le \tilde{\lambda}_{t,k}, \forall k \in [N]$.
    As a result, according to \propositionref{prop:revenue-ranking},
    \begin{equation}
        \revenue\left(\bfsigma_t^*; \bflambda_t, q_t, s_t\right) \le \revenue\left(\bfsigma_t^*; \tilde{\bflambda}_{t}, \tilde{q}_{t}, \tilde{s}_{t}\right) \le \revenue\left(\bfsigma_t; \tilde{\bflambda}_{t}, \tilde{q}_{t}, \tilde{s}_{t}\right).
    \end{equation}
    Hence, according to \propositionref{prop:revenue-bound},
    \begin{small}
        \begin{equation} \label{eq:tmp-regret-contextual-first-term}
            \begin{aligned}
                & \, \sum_{t=1}^T\bbE\left[\left(\revenue\left(\bfsigma_t^*; \bflambda_t, q_t, s_t\right) - \revenue\left(\bfsigma_t; \bflambda_t, q_t, s_t\right)\right)\bbI[\xi_{t}]\right] \\
                \le & \, \sum_{t=1}^T\bbE\left[\left(\revenue\left(\bfsigma_t; \tilde{\bflambda}_{t}, \tilde{q}_{t}, \tilde{s}_{t}\right) - \revenue\left(\bfsigma_t; \bflambda_t, q_t, s_t\right)\right)\bbI[\xi_{t}]\right] \\
                \le & \, \frac{r_{\max}}{\epsilon_Q}\sum_{t=1}^T\bbE\left[\sum_{k=1}^N\left(\left|q_t - \tilde{q}_{t}\right|\bbE\left[O_{t,k}^Q\right] + 3\left|\lambda_{t, \sigma_{t,k}} - \tilde{\lambda}_{t,\sigma_{t,k}}\right|\bbE\left[O_{t, k}^{\Lambda}\right] + \left|w_t - \tilde{w}_t\right|\bbE\left[O_{t,k}^W\right]\right)\bbI[\xi_{t}]\right] \\
                \le & \, \frac{2r_{\max}}{\epsilon_Q}\sum_{t=1}^T\bbE\Bigg[\sum_{k=1}^N\Bigg(\tau(t-1, m_x, \alpha_Q)\left\|x_t\right\|_{\left(\Sigma_{t-1}^Q\right)^{-1}}O_{t,k}^Q + 3\tau(t-1, m_y, \alpha_{\Lambda})\left\|y_{t,\sigma_{t,k}}\right\|_{\left(\Sigma_{t-1}^{\Lambda}\right)^{-1}} O_{t, k}^{\Lambda} \\
                & \quad \quad \quad \quad \quad \quad \quad \quad + \tau(t-1, m_x^2, \alpha_W)\left\|z_t\right\|_{\left(\Sigma_{t-1}^W\right)^{-1}}O_{t,k}^W\Bigg)\bbI[\xi_{t-1}]\Bigg]. \\
            \end{aligned}
        \end{equation}
    \end{small}
    We consider the $\sum_{t=1}^T\bbE\left[\sum_{k=1}^N\tau(t-1, m_y, \alpha_{\Lambda})\left\|y_{t,\sigma_{t,k}}\right\|_{\left(\Sigma_{t-1}^{\Lambda}\right)^{-1}} O_{t, k}^{\Lambda}\right]$ term first and this part is similar to the proof of \citep[Lemma 1]{wen2017online}. Define $g_{t, k} = \left\|y_{t,\sigma_{t,k}}\right\|_{\left(\Sigma_{t-1}^{\Lambda}\right)^{-1}}$ for all $t \in [T]$, $k \in [N]$ that satisfy $O_{t,k}^{\Lambda}=1$. Because
    \begin{equation}
        \Sigma_t^{\Lambda} = \sum_{u=1}^{t}\sum_{k=1}^{\Phi_{u}} y_{u,\sigma_{u,k}} y_{u,\sigma_{u,k}}^\top + \alpha_{\Lambda}\bfI_{m_y} = \Sigma_{t - 1}^{\Lambda} + \sum_{k=1}^{\Phi_t} y_{t,\sigma_{t,k}} y_{t,\sigma_{t,k}}^\top, \quad \Sigma_0^{\Lambda} = \alpha_{\Lambda}\bfI_{m_y},
    \end{equation}
    we have for all $1 \le k \le \Phi_t$,
    \begin{equation}
        \det \left(\Sigma_t^{\Lambda}\right) \ge \det\left(\Sigma_{t - 1}^{\Lambda} + y_{t, \sigma_{t,k}}y_{t, \sigma_{t,k}}^{\top}\right) = \det\left(\Sigma_{t - 1}^{\Lambda}\right)\left(1 + g_{t,k}^2\right).
    \end{equation}
    As a result,
    \begin{equation}
        \left(\frac{\det\left(\Sigma_t^{\Lambda}\right)}{\det\left(\Sigma_{t-1}^{\Lambda}\right)}\right)^N \ge \left(\frac{\det\left(\Sigma_t^{\Lambda}\right)}{\det\left(\Sigma_{t-1}^{\Lambda}\right)}\right)^{\Phi_t} = \prod_{k=1}^{\Phi_t}(1+g_{t,k}^2).
    \end{equation}
    Hence,
    \begin{equation}
        \left(\det\left(\Sigma_t^{\Lambda}\right)\right)^N \ge \alpha_{\Lambda}^{Nm_y}\prod_{u=1}^t\prod_{k=1}^{\Phi_t}(1+g_{u,k}^2).
    \end{equation}
    On the other hand, due to \assumptionref{assum:bounded},
    \begin{equation}
        \begin{aligned}
            \trace\left(\Sigma_t^{\Lambda}\right) & = \trace\left(\sum_{u=1}^{t}\sum_{k=1}^{\Phi_{u}} y_{u,\sigma_{u,k}} y_{u,\sigma_{u,k}}^\top + \alpha_{\Lambda}\bfI_{m_y}\right) = m_y \alpha_{\Lambda} + \sum_{u=1}^t\sum_{k=1}^{\Phi_t}\|y_{u, \sigma_{u,k}}\|^2 \\
            & \le m_y\alpha_{\Lambda} + tN.
        \end{aligned}
    \end{equation}
    According to the \citep[Lemma 10]{abbasi2011improved}, we have $\trace(\Sigma_t^{\Lambda}) / m_y \ge (\det(\Sigma_t^{\Lambda}))^{1/m_y}$. As a result,
    \begin{equation}
        \left(\frac{m_y\alpha_{\Lambda}+tN}{m_y}\right)^{Nm_y} \ge \alpha_{\Lambda}^{Nm_y}\prod_{u=1}^t\prod_{k=1}^{\Phi_t}(1+g_{u,k}^2).
    \end{equation}
    By taking the logarithm, we can get that
    \begin{equation}
        \sum_{u=1}^t\sum_{k=1}^{\Phi_t}\log(1 + g_{u,k}^2) \le Nm_y\log\left(1 + \frac{tN}{m_y\alpha_{\Lambda}}\right).
    \end{equation}
    Since
    \begin{equation}
        \begin{aligned}
            g_{t,k}^2 & = \left\|y_{t,\sigma_{t,k}}\right\|_{\left(\Sigma_{t-1}^{\Lambda}\right)^{-1}}^2 = y_{t,\sigma_{t,k}}^{\top}\left(\Sigma_{t-1}^{\Lambda}\right)^{-1}y_{t,\sigma_{t,k}} \\
            & \le y_{t,\sigma_{t,k}}^{\top}\left(\Sigma_0^{\Lambda}\right)^{-1}y_{t,\sigma_{t,k}} = \alpha_{\Lambda}^{-1}\|y_{t, \sigma_{t,k}}\|^2 \le \alpha_{\Lambda}^{-1},
        \end{aligned}
    \end{equation}
    we have $g_{t,k}^2 \le \alpha_{\Lambda}^{-1} \log(1 + g_{t,k}^2)/\log(1 + \alpha_{\Lambda}^{-1})$. As a result,
    \begin{equation}
        \sum_{u=1}^t\sum_{k=1}^{\Phi_t}g_{u,k}^2 \le \frac{1}{\alpha_{\Lambda}\log(1 + \alpha_{\Lambda}^{-1})}\sum_{u=1}^t\sum_{k=1}^{\Phi_t}\log(1 + g_{u,k}^2) \le \frac{Nm_y\log\left(1 + \frac{tN}{m_y\alpha_{\Lambda}}\right)}{\alpha_{\Lambda}\log(1 + \alpha_{\Lambda}^{-1})}.
    \end{equation}
    Hence, according to the Cauchy–Schwarz inequality, we have
    \begin{equation}
        \begin{aligned}
            \sum_{t=1}^T\sum_{k=1}^N\left\|y_{t,\sigma_{t,k}}\right\|_{\left(\Sigma_{t-1}^{\Lambda}\right)^{-1}} O_{t, k}^{\Lambda} & \le \sqrt{\left(\sum_{t=1}^T\sum_{k=1}^{\Phi_t}g_{t,k}^2\right)\left(\sum_{t=1}^T\sum_{k=1}^{\Phi_t} O_{t,k}^{\Lambda}\right)} \\
            & \le \sqrt{\frac{TN^2m_y\log\left(1 + \frac{TN}{m_y\alpha_{\Lambda}}\right)}{\alpha_{\Lambda}\log(1 + \alpha_{\Lambda}^{-1})}}.
        \end{aligned}
    \end{equation}

    As a result, we have
    \begin{equation} \label{eq:tmp-regret-contextual-first-term-lambda}
        \begin{aligned}
            & \, \sum_{t=1}^T\sum_{k=1}^N\tau(t-1, m_y, \alpha_{\Lambda})\left\|y_{t,k}\right\|_{\left(\Sigma_{t-1}^{\Lambda}\right)^{-1}} O_{t, k}^{\Lambda} \\
            \le & \,  \tau(T - 1, m_y, \alpha_{\Lambda})\sum_{t=1}^T\sum_{k=1}^N\left\|y_{t,k}\right\|_{\left(\Sigma_{t-1}^{\Lambda}\right)^{-1}} O_{t, k}^{\Lambda} \\
            \le & \, \tau(T - 1, m_y, \alpha_{\Lambda})\sqrt{\frac{TN^2m_y\log\left(1 + \frac{TN}{m_y\alpha_{\Lambda}}\right)}{\alpha_{\Lambda}\log(1 + \alpha_{\Lambda}^{-1})}} \le \chi(T, m_y, \alpha_{\Lambda}).
        \end{aligned}
    \end{equation}
    Using similar techniques, we can prove that
    \begin{equation} \label{eq:tmp-regret-contextual-first-term-qw}
        \begin{aligned}
            \sum_{t=1}^T\sum_{k=1}^N\tau(t-1, m_x, \alpha_Q)\left\|x_t\right\|_{\left(\Sigma_{t-1}^Q\right)^{-1}}O_{t,k}^Q & \le \tau(T-1, m_x, \alpha_Q)\sqrt{\frac{TN^2m_x\log\left(1 + \frac{TN}{m_x\alpha_Q}\right)}{\alpha_Q\log(1 + \alpha_Q^{-1})}}, \\
            & \le \chi(T, m_x, \alpha_Q), \\
            \sum_{t=1}^T\sum_{k=1}^N\tau(t-1, m_x^2, \alpha_W)\left\|z_t\right\|_{\left(\Sigma_{t-1}^Q\right)^{-1}}O_{t,k}^W & \le \tau(T-1, m_x^2, \alpha_W)\sqrt{\frac{TN^2m_x^2\log\left(1 + \frac{TN}{m_x^2\alpha_Q}\right)}{\alpha_W\log(1 + \alpha_W^{-1})}} \\
            & \le \chi(T, m_x^2, \alpha_W).
        \end{aligned}
    \end{equation}
    On the other hand,
    \begin{equation} \label{eq:tmp-regret-contextual-second-term}
        \begin{aligned}
            & \, \sum_{t=1}^T\bbE\left[\left(\revenue\left(\bfsigma_t^*; \bflambda_t, q_t, s_t\right) - \revenue\left(\bfsigma_t; \bflambda_t, q_t, s_t\right)\right)\bbI\left[\bar{\xi}_{t-1}\right]\right] \\
            \le & \, \revenue\left(\bfsigma_t^*; \bflambda_t, q_t, s_t\right)\sum_{t=1}^T\bbE\left[\bbI\left[\bar{\xi}_{t-1}\right]\right] \\
            \le & \, \frac{r_{\max}}{\epsilon_Q} \sum_{t=1}^T\frac{3}{t^2} \le \frac{C_1r_{\max}}{\epsilon_Q},
        \end{aligned}
    \end{equation}
    for a constant $C_1$. Now the claim follows after combining \equationsref{eq:tmp-regret-contextual-first-term}, \eqref{eq:tmp-regret-contextual-first-term-lambda}, \eqref{eq:tmp-regret-contextual-first-term-qw}, and \eqref{eq:tmp-regret-contextual-second-term}.
\end{proof}

\end{document}